\newtheorem{theorem}{Theorem}
\newtheorem{lemma}{Lemma}
\newtheorem{definition}{Definition}%
\newtheorem{assumption}{Assumption}[section]
\apptocmd{\thebibliography}{\setlength{\itemsep}{0pt plus 0.5pt}}{}{}
\begin{document}

\title[Article Title]{On the Convergence Analysis of Over-Parameterized Variational Autoencoders: A Neural Tangent Kernel Perspective}


\author[1]{\fnm{Li} \sur{Wang}}\email{li.wang1@csiro.au}

\author*[2]{\fnm{Wei} \sur{Huang}}\email{wei.huang.vr@riken.jp}

\affil[1]{\orgname{CSIRO Space and Astronomy}, \orgaddress{\street{26 Dick Perry Ave}, \city{Kensington}, \postcode{6151}, \state{WA}, \country{Australia}}}

\affil*[2]{\orgname{RIKEN AIP}, \city{Tokyo},  \country{Japan}}


\abstract{
Variational Auto-Encoders (VAEs) have emerged as powerful probabilistic models for generative tasks. However, their convergence properties have not been rigorously proven. The challenge of proving convergence is inherently difficult due to the highly non-convex nature of the training objective and the implementation of a Stochastic Neural Network (SNN) within VAE architectures. This paper addresses these challenges by characterizing the optimization trajectory of SNNs utilized in VAEs through the lens of Neural Tangent Kernel (NTK) techniques. These techniques govern the optimization and generalization behaviors of ultra-wide neural networks. We provide a mathematical proof of VAE convergence under mild assumptions, thus advancing the theoretical understanding of VAE optimization dynamics. Furthermore, we establish a novel connection between the optimization problem faced by over-parameterized SNNs and the Kernel Ridge Regression (KRR) problem. Our findings not only contribute to the theoretical foundation of VAEs but also open new avenues for investigating the optimization of generative models using advanced kernel methods. Our theoretical claims are verified by experimental simulations.
}

\keywords{Variational Auto-encoder, Stochastic Neural Network, Neural Tangent Kernel}

\maketitle
\section{Introduction}

Variational Autoencoders (VAEs) \cite{kingma2013auto} have garnered significant interest and have been applied across a diverse array of applications, ranging from image generation and style transfer \cite{radford2015unsupervised,ref:vae-image, wang2022pruning} to natural language processing \cite{bowman2015generating}. VAEs aim to learn a compressed yet structured latent representation of input data by maximizing the Evidence Lower BOund (ELBO), thereby facilitating the reconstruction of the original data. Unlike traditional autoencoders \cite{ng2011sparse,tschannen2018recent}, VAEs focus on learning the distribution of latent codes, enabling the generation of new samples from this distribution.
The dimensionality of the latent space is dictated by data complexity, model objectives, and task-specific needs, ranging from a few to several thousand dimensions. Larger latent spaces can encode more information and provide better disentanglement learning~\cite{song2019latent,lim2020deep, zhang2022weighted}, a finding that our experiments also support (see Figures \ref{fig:measure} and \ref{fig:add}). Concurrently, there is an intuitive belief that a larger latent space may pose challenges to training, such as issues with non-convergence or slow convergence rates.

On the other hand, despite the widespread application of VAEs, our theoretical understanding of the training dynamics remains limited. Investigating the optimization of Deep VAEs theoretically is notoriously challenging, as training deep neural networks involves non-convex optimization of a high-dimensional objective function.  The complexity of this optimization problem is further exacerbated by the incorporation of stochastic neural networks (SNNs) in VAEs, which introduces additional stochasticity into the training process. Several studies have attempted to shed light on this problem from different perspectives. For instance, He et al. \cite{he2019lagging} conducted an empirical investigation of the learning dynamics of deep
VAEs to study the posterior collapse. Lucas et al. \cite{lucas2019don} presented a simple and intuitive analysis of linear VAEs to explain the same collapse. Moreover, Koehler et al. \cite{koehler2021variational} analyzed the training dynamics, offering insights into implicit bias convergence for linear VAEs. However, much of the existing research either leans heavily on empirical simulations or centers around linear VAEs, leaving the broader success of VAEs insufficiently explained.

To address concerns about the convergence in high-dimensional latent spaces in VAEs, in this work, we introduce a novel convergence analysis for VAE training dynamics, specifically when an over-parameterized stochastic neural network serves as its model. While the convergence properties of deterministic neural networks have been extensively explored \cite{jacot2018neural,allen2019convergence,du2018gradient,du2019gradient,huang2020neural,huang2021towards,zou2020gradient,chen2021equivalence,chen2019much}, the convergence behavior of SNNs in VAE remains less understood. Our approach leverages non-asymptotic analysis of dynamical systems, allowing us to examine the behavior of over-parameterized VAEs during training. We demonstrate that the convergence outcome aligns with solving a kernel ridge regression under certain mild assumptions. To our knowledge, this is the first rigorous analysis of the convergence behavior of over-parameterized VAEs. We further validate our theoretical insights through experiments on various image generation tasks. In summary, our key contributions are as follows:

\begin{itemize}
    \item We establish a non-asymptotic convergence analysis for over-parameterized SNNs. Specifically, we investigate the convergence rate of the optimization algorithm used to train the VAE. 
    
    \item We link the optimization of over-parameterized SNNs with kernel ridge regression, shedding light on the regularization effects of the KL penalty in VAEs.

    \item Theoretically, we prove that VAEs with high-dimensional latent spaces can converge, providing a theoretical foundation for employing large latent spaces in VAEs to capture more information.

\end{itemize}

\section{Related Work}

\paragraph{Convergence Analysis of Over-parameterized Neural Networks}
The convergence analysis of over-parameterized neural networks (NNs) has become an important topic in deep learning research. In a seminal paper, Jacot et al. \cite{jacot2018neural} showed that the optimization behavior of infinitely-wide NNs can be described using a kernel function called neural tangent Kerenl (NTK). This kernel simplifies the optimization dynamics into a linear system that is more tractable. The NTK provides a way to explicitly characterize the dynamics of the neural network during training and to analyze its convergence behavior \cite{lee2019wide,yang2019scaling}. Additionally, a series of studies \cite{du2018gradient,du2019gradient,arora2019exact,arora2019fine,allen2019convergence,zou2020gradient} have presented convergence results of over-parameterized networks through a non-asymptotic lens. Furthermore, the Rademacher complexity analysis characterized the generalization ability of trained over-parameterized NNs on unseen data \cite{cao2019generalization,arora2019fine}. In addition, NTK has been widely applied to different deep network structures, aiding in understanding their optimization dynamics. This includes convolutional networks \cite{arora2019exact}, orthogonally initialized NN \cite{huang2020neural}, graph neural networks \cite{du2019graph}, active learning \cite{wang2022deep}, transformer \cite{hron2020infinite}, neural architecture search \cite{chen2022deep}, and GAN \cite{franceschi2022neural}.

Among existing studies of training dynamics of over-parameterized networks, the works of \cite{nguyen2021benefits,ziyin2022stochastic,huang2023analyzing,clerico2023wide} are the most aligned with our research. Nguyen et al. \cite{nguyen2021benefits}  explored the gradient dynamics of over-parameterized auto-encoders (AE) and provided a rigorous proof for the linear convergence of gradient descent in the context of AEs. However, their techniques cannot be directly applied to variational auto-encoders (VAEs) because of the additional randomness introduced by stochastic neural networks. In a separate study, Liu et al. \cite{ziyin2022stochastic} examined the predictive variance of stochastic neural networks. They demonstrated that as the width of an optimized stochastic neural network approaches infinity, its predictive variance on the training set diminishes to zero. While their work sheds light on the behavior of stochastic neural networks in the infinite-width limit, they have not shown the convergence of infinitely-wide neural networks, which is one of the most desirable perspectives of studying a NN. Two other notable studies  \cite{huang2023analyzing,clerico2023wide} approached SNNs within the PAC-Bayes framework, leveraging the NTK. However, the SNN structure in our VAE research differs from the PAC-Bayes framework, particularly in how stochasticity is introduced in the latent layer.


\paragraph{Theoretical study of VAEs}

While VAEs have been successfully applied in various domains, their theoretical properties are still not fully understood. Several recent works have attempted to provide a theoretical understanding of VAEs. For instance, recent works by \cite{alemi2018fixing,dai2019diagnosing,rolinek2019variational} refereed to information theory, deriving variational bounds on the mutual information between the input and the latent variable and the objective function. One work by Lucas et al. \cite{lucas2019don} provided an intuitive explanation for the posterior collapse phenomenon in VAEs. They analyze linear VAEs and show that the posterior collapse can be attributed to the low-rank structure of the encoder. In addition, Kumar et al. \cite{kumar2020implicit} presented an approximation of VAE objective function consisting of deterministic auto-encoding objective plus analytic regularizers that depend on the Hessian or Jacobian of the decoding model. Nakagawa et al. \cite{nakagawa2021quantitative} provided a quantitative understanding of the VAE property through the differential-geometric and information-theoretic interpretations of VAE. Moreover, \cite{wipf2023marginalization,dai2021value,dai2020usual} are not around the optimization dynamics but they study problems of optimization landscape.
In contrast, our work studies the training dynamics of over-parameterized VAEs with the non-linear activation, emphasizing the challenges on the non-linear activation and the complicated optimization behavior.

\section{Problem Setup and Preliminary}

\subsection{Notation}

In this work, we adopt a standard notation to represent vectors, matrices, and scalars. Specifically, we use bold-faced letters for vectors and matrices and non-bold letters for scalars.
To denote the Euclidean norm of a vector or the spectral norm of a matrix, we use the notation $\| \cdot \|_2$. The Frobenius norm of a matrix is represented by $\| \cdot \|_F$. We use the notation $[n] = {1,2,\ldots,n }$ to represent the set of integers from 1 to $n$. Besides, we represent a matrix as a set of row vectors, i.e., 
${\bf W} = [{\bf w}^\top_1, {\bf w}^\top_2, \dots, {\bf w}^\top_m ]^\top$, 
where ${\bf w}_r$ with $r \in [m]$ is a column vector of the matrix. Finally, we denote the least eigenvalue of a matrix by $\lambda_0(\boldsymbol{\Theta})$, which is equivalent to $\lambda_{\min}( \boldsymbol{\Theta} )$. 


\subsection{Variational Auto-encoder}

A Variational auto-encoder (VAE) \cite{kingma2013auto}, as a directed probabilistic graphical model (DPGM), is designed to learn a latent variable model. Its primary objective is to maximize the log-likelihood of the training data $\{{\bf x}_i \}^n_{i=1}$ via variational inference, where $n$ is the number of training samples. The VAE introduces a distribution $q_{\phi}(\bf z|x)$ to approximate the intractable true posterior $p(\bf z|x)$, where $\phi$ are neural network parameters that can be learned in the encoder. 
Then, the decoder takes ${\bf z}$ as input to generate ${\bf x}^\prime$ as a reconstruction for ${\bf x}$.

The common training objective of the VAE is to maximize the Evidence Lower Bound (ELBO), given by:
{\small \begin{equation} \label{eq:target}
L_{elbo} = \frac{1}{n} \sum_{i=1}^n \mathbb{E}_\mathbf{z}[\log  p_{\theta}({\bf x}'_i|{\bf z})] - \mathrm{KL}(q_{\phi}({\bf z}| {\bf x}_i)\|p({\bf z})),
\end{equation} }
where ${\bf z} \sim q_{\phi}({\bf z}|{\bf x}_i)$, and $\phi$ and $\theta$
represent the parameters in encoder and decoder, respectively. The first term in the ELBO measures the reconstruction loss between the generated ${\bf x}^\prime$ and the original ${\bf x}$. The second term represents the Kullback-Leibler (KL) divergence between the approximate posterior $q(\bf z|x)$ and the prior $p({\bf z})$, where $p({\bf z})$ is often chosen to be an isotropic multivariate Gaussian distribution.

\subsection{Stochastic Neural Network and Objective Function}

\begin{figure}
	\centering  
	\includegraphics[width=0.75\linewidth]{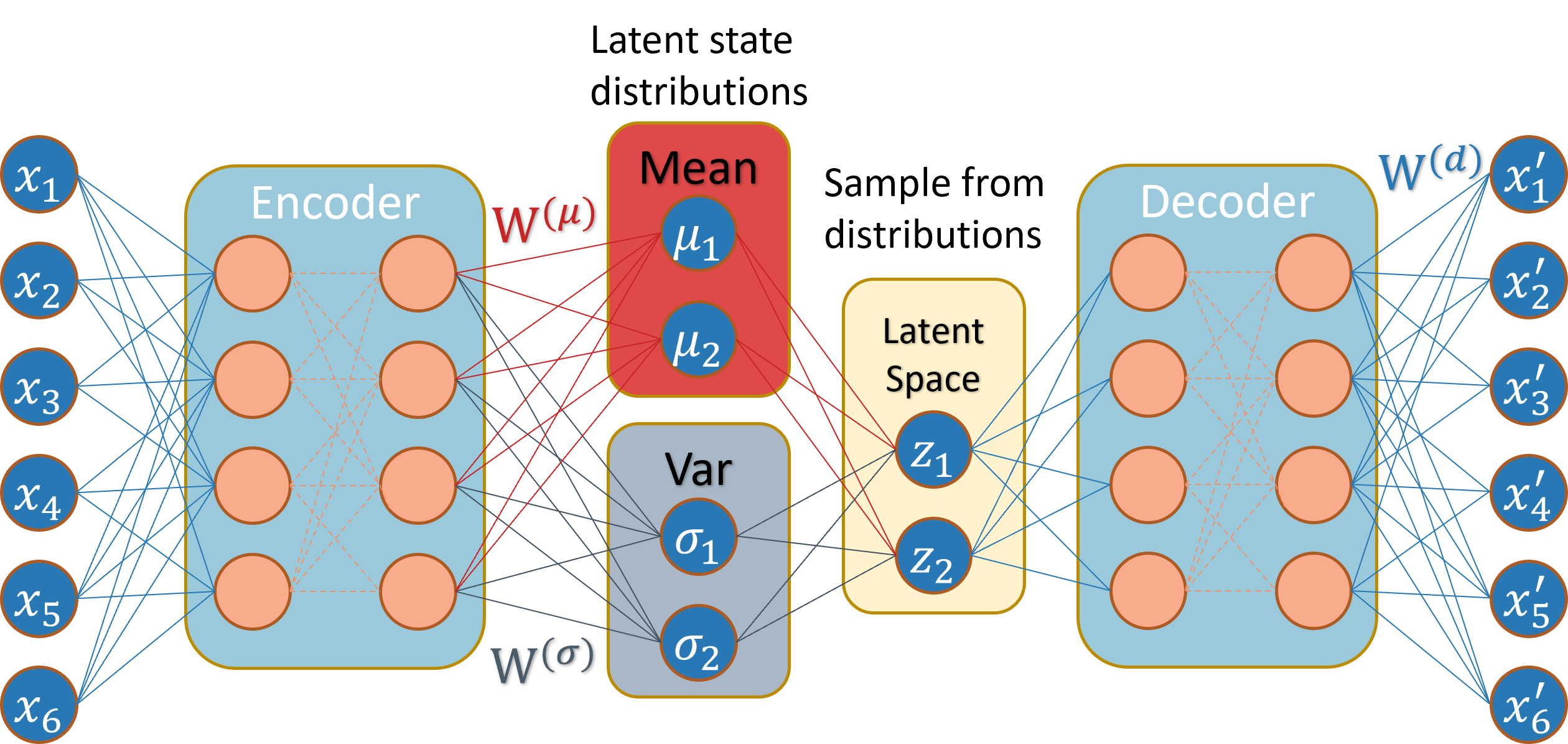}
	\caption{Architecture of Variational Auto-Encoder.}
	\label{VAE}
\end{figure}

Consider a stochastic neural network (SNN) $\mathbf{f} \in \mathbb{R}^d$, where $d$ is the input dimension. In the context of this work, our SNN is defined as follows:
{\small \begin{equation} \label{eq:net}
 {\small \begin{aligned}
    \mathbf{f}({\bf x}) = \frac{1}{\sqrt{m}} ({\bf W}^{(d)})^\top \psi (\sigma({\bf z})), \quad
     {\bf z} \sim \mathcal{N}({\bf W}^{(\mu)} {\bf x}^{(e)},  \mathrm{diag}({\bf W}^{(\sigma)} {\bf x}^{(e)})),
     \end{aligned} }
\end{equation} }
where ${\bf x}^{(e)} \in \mathbb{R}^d$ is the encoded representation derived from the input $\mathbf{x}$, ${\bf W}^{(\mu)}, {\bf W}^{(\sigma)} \in \mathbb{R}^{m \times d}$ are weight matrices employed to construct the latent Gaussian representation. Here  $m$ represents the width of the network, indicating the number of neurons, $\sigma(\cdot)$ is the non-linear activation function, $\psi(\cdot) $ is the decoder representation function, and ${\bf W}^{(d)} \in \mathbb{R}^{m \times d}$ is the linear weight matrix utilized in the final layer. A visual representation of the SNN under study is depicted in Figure \ref{VAE}.

In the construction of the latent representation, we employ the re-parametrization trick, a technique that allows for the backpropagation of gradients through random nodes. In particular, the latent variable can be expressed as:
{\small \begin{equation}\label{eq:repara}
{\small \begin{aligned}
{\bf z} = {\bf W}^{(\mu)} {\bf x}^{(e)} + ({\bf W}^{(\sigma)}  \odot  \boldsymbol{\zeta} ) {\bf x}^{(e)}, \quad \boldsymbol{\zeta} \sim \mathcal{N}( {\bf 0}, {\bf I} ), 
\end{aligned} }
\end{equation} }
where $\mathbf{W}^{(\mu)}$ and $\mathbf{W}^{(\sigma)}$ represent the mean and variance weights, respectively. Besides, $\boldsymbol{\zeta}$ is a random variable drawn from a standard normal distribution.

Given the structure of the SNN, our objective function considered in this work is defined as:
{\small \begin{equation} \label{eq:objective}
L = \frac{1}{n}\sum_{i=1}^n \big[  \ell(\hat{\mathbf{f}}( \mathbf{x}_i ), \mathbf{x}_i)  + \beta {\rm KL} \big( P(\mathbf{z}_i(t)) \| P(\mathbf{z}_i(0))  \big) \big],
\end{equation} }
where $\hat{\mathbf{f}}(\mathbf{x}_i) \triangleq \mathbb{E}_{\boldsymbol{\zeta}}[ \mathbf{f}( \mathbf{x}_i, \boldsymbol{\zeta}) ]$, and $\mathbf{z}_i(t)$ is the latent representation for input $\mathbf{x}_i$ at time $t$.
Besides, $\beta$ is an adjustable hyperparameter that balances latent channel capacity and independence constraints
with reconstruction accuracy \cite{higgins2016beta}. The first term $\sum_{i=1}^n \ell(\hat{\mathbf{f}}(\mathbf{x}_i ), \mathbf{x}_i)$ is called the reconstruction loss. In this study, we utilize the mean squared error as our reconstruction loss, following seminal theoretical works \cite{huang2023analyzing,du2019gradient,cao2019generalization,arora2019fine}. The second term ${\rm KL}(\cdot)$ is a Kullback–Leibler (KL) divergence, where prior distribution is the Gaussian distribution of latent variable at initialization, and the posterior is the distribution of latent variable after training, $ \mathbf{z}_i  \sim \mathcal{N}({\bf W}^{(\mu)} {\bf x}^{(e)}_i, \mathrm{diag}({\bf W}^{(\sigma)} {\bf x}^{(e)}_i))$. It's worth noting that our KL is tailored to align with our theoretical analysis for constructing kernel ridge regression.

To optimize the objective function given by (\ref{eq:objective}), we adopt a gradient descent rule:

{\small \begin{equation}
    {\small \begin{aligned}
     \mathbf{W}^{(s)}(t+1) & =   \mathbf{W}^{(s)}(t) - \eta \frac{ \partial  {L}(t)}{ \partial \mathbf{W}^{(s)}(t)}, \text{where } s\in \{\mu,\sigma,d\},\\
    \end{aligned} }
\end{equation} }
where $\eta$ is the learning rate. Note that while the weights in the encoder and decoder remain fixed, we specifically optimize the mean weights $\mathbf{W}^{(\mu)}$, variance weights $\mathbf{W}^{(\sigma)}$, and the weights in the final layer $\mathbf{W}^{(d)}$. This optimization strategy is primarily adopted for the sake of theoretical simplicity. It's worth noting that this choice does not compromise or alter our final conclusions.


\section{Theoretical Results}

In this section, we present our primary theoretical findings related to the optimization of the VAE's objective function. We start from the essential definitions and assumptions, later the convergence will be established. Finally, we prove the kernel ridge regression result through over-parameterization.

\subsection{Definition and Assumptions}

For the purpose of our optimization analysis, we introduce the concept of the neural tangent kernel for a stochastic neural network:
\begin{definition}[Stochastic Neural Tangent Kernel] \label{eq:ntk} 
The tangent kernels associated with output function at weights are defined as, 
{\small \begin{equation}\label{NTK_kernel}
{\small \begin{aligned}
\boldsymbol{\Theta}_{ik,jk'}^{(s)}  &=\nabla_{ {\bf W^{(s)} } }  \hat{f}_k ({\bf x}_i;t)^{\top} \nabla_{\bf W^{(s)}} \hat{f}_{k'}({\bf x}_j;t) \in \mathbb{R}, \text{where } s\in \{\mu,\sigma,d\}\\
\end{aligned} }
\end{equation} }
and $i,j \in [1,n]$ denote the index of input samples while $k, k' \in [1,d]$ represent the index of output functions. Furthermore, the NTK for the entire network is defined as $ \boldsymbol{\Theta} =\boldsymbol{\Theta}^{(\mu)}  + \boldsymbol{\Theta}^{(\sigma)} + \boldsymbol{\Theta}^{(d)}$.   
\end{definition}

A few remarks on Definition~\ref{eq:ntk} 
are in order. Unlike standard (deterministic) neural networks, the VAE comprises two sets of parameters in the latent layer, namely, ${\bf W }^{(\mu)}$ and ${\bf W }^{(\sigma)}$. Due to the reparameterization trick, gradient descent is executed on each of these parameters independently. Consequently, we observe two distinct tangent kernels corresponding to each parameter set. Secondly, The scenario with multiple outputs in variational autoencoder networks presents added complexity compared to networks with a single output  \cite{du2018gradient,arora2019fine}. Given that the output dimension of the stochastic neural network is $d$, the neural tangent kernel is a matrix of size $\mathbb{R}^{nd \times nd}$.
As we delve deeper in the subsequent sections, it will become evident that the non-diagonal NTK across the output index is zero, and the diagonal NTK remains consistent across the output index. This uniformity allows us to employ Kronecker products, facilitating the derivation of NTKs.

Next, we impose some technical conditions on the activation function, which is stated as follows:
\begin{assumption}  [Continuous and Partial Derivative Continuous]  \label{ass:activation} The activation function
$\sigma(x)$
 and its partial derivative $\frac{\partial \sigma(x)}{\partial x} $
 are continuous in $x$.
\end{assumption}
This assumption ensures that we can interchange the operations of integration and differentiation over the activation function. Subsequently, we present technical conditions on both the activation function and the decoder representation function:
\begin{assumption}  [$L$-Lipschitz and $\beta$-Smooth]  \label{ass:activation_lip} There exist  constants $\beta$ and $L$ such that for any $x,x' \in \mathbb{R}$:
{\small \begin{align*}
    \left| \sigma(x) - \sigma(x') \right| \le L \left| x -x' \right|, 
     \left| \sigma'(x) - \sigma'(x') \right| \le \beta \left| x -x' \right|, \\
      \left| \psi(x) - \psi(x') \right| \le L \left| x -x' \right|, 
     \left| \psi'(x) - \psi'(x') \right| \le \beta \left| x -x' \right|.
\end{align*} }
\end{assumption}
These conditions are important in demonstrating the stability of the training process within the framework of the NTK.

\subsection{Optimization analysis}

For the sake of simplification, we focus on the optimization of the stochastic neural network as described in (\ref{eq:net}), emphasizing solely on the reconstruction loss. This means we are setting aside the KL divergence term for the time being. Additionally, given that we're adopting a squared loss without KL divergence, the objective function (\ref{eq:objective}) reduces to:
{\small \begin{equation}
   L_{mse} = \frac{1}{2n}\sum_{i=1}^n   \left \| \hat{\mathbf{f}} (\mathbf{x}_i)- \mathbf{x}_i \right \|^2_2.  
\end{equation} }
Then the gradient flow dynamics of output function $\hat{f}_k$ are governed by:
{\small \begin{equation} \label{eq:dynamics}
{\small \begin{aligned}
 \frac{d  \hat{f}_k({\bf x}_i;t)}{d t} &  = \frac{1}{n}
   \sum_{j=1}^n \sum_{k'=1}^d \left({x}_{j,k'}-  \hat{f}_{k'}({\bf x}_j;t)\right) \boldsymbol{\Theta}_{ik,jk'}(t).
\end{aligned} }
\end{equation} }
Equation (\ref{eq:dynamics}) implies that the dynamics of output function are governed by the neural tangent kernels. Furthermore, as we will show later, the neural tangent kernels will stay constant during the training process in the infinite-width limit. In this way, Equation (\ref{eq:dynamics}) reduces to an ordinary differential equation (ODE):
{\small \begin{equation} \label{eq:limit_dynamics}
{\small \begin{aligned}
 \frac{d  \hat{f}_k({\bf x}_i;t)}{d t} &  = \frac{1}{n}
   \sum_{j=1}^n \sum_{k'=1}^d \left({x}_{j,k'}-  \hat{f}_{k'}({\bf x}_j;t) \right)  \boldsymbol{\Theta}^{(\infty)}_{ik,jk'},
\end{aligned} }
\end{equation} }
where we define the neural tangent kernel of an infinitely-wide SNN by:
{\small \begin{equation}
    \boldsymbol{\Theta}^{(\infty)} \triangleq \lim_{m \rightarrow \infty} \boldsymbol{\Theta} = \lim_{m \rightarrow \infty} \left(\boldsymbol{\Theta}^{(\mu)}+ \boldsymbol{\Theta}^{(\sigma)}+ \boldsymbol{\Theta}^{(d)}\right).
\end{equation} }
To demonstrate the convergence result induced by Equation (\ref{eq:limit_dynamics}), we perform  an in-depth concentration analysis. This analysis focuses on the convergence of stochastic neural networks in a non-asymptotic manner, i.e., with a large but finite width. We present our main result in the following theorem: 
\begin{theorem}\label{thm:opt} Assume the lowest eigenvalue of the limiting NTK is greater than zero, i.e., $\lambda_{0}(\boldsymbol{\Theta}^\infty) $ and $\| \mathbf{x}^{(e)}_i \|_2 = 1$ for $i\in [n]$.
Suppose the network's width $m = \Omega  \left( \max \left\{  \frac{n^5 d^3  }{\lambda_0^4 \delta^2 }  ,  \frac{n^2d^2}{\lambda_0}\log\frac{nd}{\delta}\right \} \right)$, then with probability at least $ 1- \delta $ over the random initialization we have,
{\small \begin{equation}
  L_{mse}(t)  \le \exp\left(-(\lambda_0/n) t \right)  L_{mse}(0).
\end{equation} }
\end{theorem}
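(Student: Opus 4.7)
The plan is to follow the standard NTK-based non-asymptotic strategy of Du et al.\ / Arora et al., adapted to the stochastic, multi-output setting of the VAE. Stack the residuals into $\mathbf{r}(t)\in\mathbb{R}^{nd}$ with entries $\hat{f}_k(\mathbf{x}_i;t)-x_{i,k}$, so that (\ref{eq:dynamics}) reads $\dot{\mathbf{r}}(t) = -\tfrac{1}{n}\boldsymbol{\Theta}(t)\mathbf{r}(t)$ and $L_{mse}(t)=\tfrac{1}{2n}\|\mathbf{r}(t)\|_2^2$. The entire theorem reduces to maintaining $\lambda_{\min}(\boldsymbol{\Theta}(t))\ge \lambda_0/2$ throughout training, because then $\dot L_{mse}(t) \le -(\lambda_0/n)\, L_{mse}(t)$ and Gronwall gives the claimed exponential bound.

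Two perturbation results underlie the argument. First, at initialization each block $\boldsymbol{\Theta}^{(s)}(0)$ is a sum of $m$ i.i.d.\ per-neuron contributions; after passing $\mathbb{E}_{\boldsymbol{\zeta}}$ under the derivative (justified by Assumption~\ref{ass:activation}), the cross-output entries vanish because the columns of $\mathbf{W}^{(d)}$ are independent across outputs, yielding the Kronecker structure $\boldsymbol{\Theta}^{(\infty)}=\mathbf{I}_d\otimes \mathbf{K}^{(\infty)}$ with $\mathbf{K}^{(\infty)}\in\mathbb{R}^{n\times n}$. A matrix Hoeffding or Bernstein bound, combined with $\|\mathbf{x}^{(e)}_i\|_2=1$ and the $L$-Lipschitz activation, yields $\|\boldsymbol{\Theta}(0)-\boldsymbol{\Theta}^{(\infty)}\|_2 \le \lambda_0/4$ with probability $1-\delta/2$ once $m=\Omega(n^2 d^2 \log(nd/\delta)/\lambda_0)$, which is exactly the second branch of the width requirement. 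Second, if every trainable weight matrix stays within Frobenius radius $R$ of its initialization then Assumption~\ref{ass:activation_lip} together with a standard covering argument gives $\|\boldsymbol{\Theta}(t)-\boldsymbol{\Theta}(0)\|_2 \le \lambda_0/4$, provided $R \lesssim R_\star := \sqrt{nd}\,\|\mathbf{r}(0)\|_2/(\lambda_0\sqrt{m})$.

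These two ingredients close a standard continuity/bootstrap argument. Set $t_\star = \sup\{t\ge 0 : \lambda_{\min}(\boldsymbol{\Theta}(s))\ge \lambda_0/2 \text{ for all } s\le t\}$. On $[0,t_\star)$ the exponential decay of $\|\mathbf{r}(s)\|_2$ already holds, so the weight drift is controlled by $\|\mathbf{W}^{(s)}(t)-\mathbf{W}^{(s)}(0)\|_F \le \tfrac{1}{n}\int_0^t \|\partial_{\mathbf{W}^{(s)}}\hat{\mathbf{f}}(s)\|_2\, \|\mathbf{r}(s)\|_2\, ds = O(\sqrt{nd}\,\|\mathbf{r}(0)\|_2/(\lambda_0 \sqrt{m}))$. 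Using the high-probability initialization estimate $\|\mathbf{r}(0)\|_2 = O(\sqrt{nd/\delta})$ and enforcing the first branch $m=\Omega(n^5 d^3/(\lambda_0^4 \delta^2))$ of the width condition, this drift is strictly smaller than $R_\star$; by the NTK stability lemma this would contradict $t_\star<\infty$, so $\lambda_{\min}(\boldsymbol{\Theta}(t))\ge \lambda_0/2$ for all $t\ge 0$ and the exponential decay of $L_{mse}$ follows.

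The step I expect to be most delicate is the SNN-specific handling of $\boldsymbol{\Theta}^{(\sigma)}$: the Jacobian in $\mathbf{W}^{(\sigma)}$ carries the reparameterization noise $\boldsymbol{\zeta}$, so the per-neuron contributions are not almost-surely bounded but only sub-Gaussian in $\boldsymbol{\zeta}$. I would dispatch this by always taking $\mathbb{E}_{\boldsymbol{\zeta}}$ under the derivative first (legitimate by Assumption~\ref{ass:activation}), obtaining a deterministic per-neuron kernel to which standard neuron-level concentration applies; the same maneuver also propagates the stability estimate cleanly through the variance branch. The Kronecker factorization $\mathbf{I}_d\otimes \mathbf{K}^{(\infty)}$ then reduces the $nd\times nd$ spectral condition to an $n\times n$ one, which is precisely what lets the scalar-output NTK template go through and makes the extra factor of $d$ percolate into the final width bound.
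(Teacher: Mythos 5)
Your proposal follows essentially the same route as the paper: Kronecker factorization of the limiting kernel, entrywise concentration at initialization, Lipschitz-smoothness to keep $\boldsymbol{\Theta}(t)$ close to $\boldsymbol{\Theta}(0)$ under bounded weight drift, an integrated drift bound from the exponential decay, and a bootstrap/contradiction to close the loop; the paper organizes these as Lemmas~\ref{lem:init}--\ref{lem:final} with the same width-dependence $m = \Omega(\max\{n^5 d^3/(\lambda_0^4\delta^2),\; n^2 d^2 \lambda_0^{-1}\log(nd/\delta)\})$, and your treatment of $\boldsymbol{\Theta}^{(\sigma)}$ (take $\mathbb{E}_{\boldsymbol{\zeta}}$ under the Jacobian first, then concentrate over $m$) is precisely the paper's device. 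One notational slip: you define $R_\star := \sqrt{nd}\,\|\mathbf{r}(0)\|_2/(\lambda_0\sqrt{m})$ as the \emph{tolerance radius} for NTK stability, but that quantity is actually the \emph{drift bound}; the tolerance is $\asymp \lambda_0/(n\sqrt{d})$ as in the paper's Lemma~\ref{lem:train}, and the width condition ensures drift $<$ tolerance, so the logic is fine once the two roles are swapped.
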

The proof sketch of Theorem \ref{thm:opt} will be given in Section \ref{sec:proof}. 
Theorem \ref{thm:opt} establishes that if $m$ is large enough, the expected training error converges to zero at a linear rate. In particular, the least eigenvalue of NTK governs the convergence rate. 

\subsection{Regularization effect of KL divergence} \label{sec:kl}

By Theorem \ref{thm:opt}, we establish the global convergence of stochastic neural networks with a large width in VAE. Building on this foundation, we further consider full objective function (\ref{eq:objective}) which incorporates an additional KL divergence term.

After a detailed calculation of the KL divergence for two Gaussian distributions, we simplify our analysis by making certain assumptions. Specifically, we assume that $\mathbf{W}^{(\sigma)}$ remains constant and select a prior $\mathbf{x}^{(e)}_i$ such that the objective function (\ref{eq:objective}) is transformed to:
{\small \begin{equation} \label{eq:true_obj}
     {L}(t) =   \frac{1}{2n}   \left \|\hat{\mathbf{f}} (\mathbf{X};t)- \mathbf{X} \right \|^2_F +  \frac{\beta}{2} \left \| \mathbf{W}^{(\mu)}(t) - \mathbf{W}^{(\mu)}(0)  \right \|^2_F.
\end{equation} }
Building on this, we further analyze the regularization effect of the KL term when training VAEs and present our findings in the subsequent theorem:
\begin{theorem} \label{thm:solution}
 Suppose $m \ge {\rm poly}({n},1/\lambda_0, 1/\delta,1/ \mathcal{E})$ and the objective function follows the form (\ref{eq:true_obj}). When we only optimize the mean weight $\mathbf{W}^{(\mu)}$, for any test input $\mathbf{x}_{te} \in \mathbb{R}^d$ with probability at least $(1-\delta)$ over the random initialization, we have
{\small \begin{equation}
{\small \begin{aligned}
 \hat{\mathbf{f}}({\bf x}_{te},\infty) & = \boldsymbol{\Theta}^{(\mu)}({\bf x}_{te},{\bf X})(\boldsymbol{\Theta}^{(\mu)}({\bf X},{\bf X})+\beta {\bf I})^{-1} {\bf X}   \pm \mathcal{E}.
 \end{aligned} }
\end{equation} }
where $\mathcal{E}$ is the residual error term \textcolor{black}{and is upper bounded by $\mathcal{E}_{init} + \mathcal{E}_{\Theta} \frac{\sqrt{n}}{\lambda_0+  \beta  }$ with $  \|\hat{\mathbf{f}} \left(\boldsymbol{\theta}(0),\mathbf{x}_{te}\right)  \|_2 \le \mathcal{E}_{\rm init}$ and $\|\boldsymbol{\Theta}^\infty -\boldsymbol{\Theta}(t)  \|_2 \le \mathcal{E}_{\Theta}$}.
\end{theorem}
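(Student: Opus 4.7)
The plan is to reduce the infinite-time training dynamics of the regularized objective (\ref{eq:true_obj}) to a linear kernel ridge regression problem via the NTK linearization, and then quantify the two sources of slippage ($\mathcal{E}_{\rm init}$ from finite-initialization randomness and $\mathcal{E}_\Theta$ from NTK drift) separately.

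First I would introduce the Jacobian at initialization $\Phi_{\mathbf{x}} := \nabla_{\mathbf{W}^{(\mu)}} \hat{\mathbf{f}}(\mathbf{x};0)$, which by Definition \ref{eq:ntk} satisfies $\boldsymbol{\Theta}^{(\mu)}(\mathbf{x},\mathbf{x}') = \Phi_{\mathbf{x}}^\top \Phi_{\mathbf{x}'}$ in the limit $m \to \infty$. In the NTK regime, the linearization $\hat{\mathbf{f}}(\mathbf{x};t) \approx \hat{\mathbf{f}}(\mathbf{x};0) + \Phi_{\mathbf{x}} \, \Delta\mathbf{W}^{(\mu)}(t)$ with $\Delta\mathbf{W}^{(\mu)}(t) := \mathbf{W}^{(\mu)}(t) - \mathbf{W}^{(\mu)}(0)$ turns (\ref{eq:true_obj}) into a strongly convex quadratic in $\Delta\mathbf{W}^{(\mu)}$. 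Its unique minimizer, reached by the gradient flow as $t\to\infty$, is obtained by setting the gradient to zero; using the push-through identity $(\Phi^\top\Phi + c\mathbf{I})^{-1}\Phi^\top = \Phi^\top(\Phi\Phi^\top + c\mathbf{I})^{-1}$ and plugging back into the test-point linearization yields
\begin{equation*}
\hat{\mathbf{f}}(\mathbf{x}_{te};\infty) \;\approx\; \hat{\mathbf{f}}(\mathbf{x}_{te};0) + \boldsymbol{\Theta}^{(\mu)}(\mathbf{x}_{te},\mathbf{X}) \big(\boldsymbol{\Theta}^{(\mu)}(\mathbf{X},\mathbf{X}) + \beta \mathbf{I}\big)^{-1} \big(\mathbf{X} - \hat{\mathbf{f}}(\mathbf{X};0)\big),
\end{equation*}
which is exactly the target formula up to two initialization-output terms that will be absorbed into $\mathcal{E}_{\rm init}$.

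Next, I would decompose the deviation of the actual $\hat{\mathbf{f}}(\mathbf{x}_{te};\infty)$ from the target KRR expression into three contributions: (i) the initial readouts $\hat{\mathbf{f}}(\mathbf{x}_{te};0)$ and the effect of $\hat{\mathbf{f}}(\mathbf{X};0)$ perturbing the right-hand side, both controlled in norm by standard Gaussian concentration for the $1/\sqrt{m}$-scaled output layer in (\ref{eq:net}), producing $\mathcal{E}_{\rm init}$; (ii) the swap of the finite-width, trained kernel $\boldsymbol{\Theta}(t)$ for $\boldsymbol{\Theta}^\infty$, propagated through matrix inversion via $\lambda_{\min}(\boldsymbol{\Theta}^\infty + \beta\mathbf{I}) \ge \lambda_0 + \beta$ and $\|\mathbf{X}\|_F \le \sqrt{n}$, which gives precisely the $\mathcal{E}_\Theta \sqrt{n}/(\lambda_0 + \beta)$ factor; and (iii) the Taylor-linearization residual, which Assumption \ref{ass:activation_lip} bounds by $\mathcal{O}(\|\Delta\mathbf{W}^{(\mu)}\|_F^2 / \sqrt{m})$ and which is absorbed into $\mathcal{E}$ once $m$ is polynomially large. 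Convergence to the equilibrium of the regularized gradient flow follows from the same linear-rate spectral argument as Theorem \ref{thm:opt}, with the effective rate enhanced to $\lambda_0 + \beta$.

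The main obstacle, where I expect the bulk of the work, is establishing the NTK stability bound $\|\boldsymbol{\Theta}^\infty - \boldsymbol{\Theta}(t)\|_2 \le \mathcal{E}_\Theta$ self-consistently over the entire $[0,\infty)$ trajectory, not merely on a finite horizon. A naive monotone-descent bound $\|\Delta\mathbf{W}^{(\mu)}(t)\|_F \le \sqrt{2L(0)/\beta}$ is too lossy when $\beta$ is small. Instead I would run a bootstrap argument analogous to the proof of Theorem \ref{thm:opt}: on the maximal interval where the kernel perturbation stays below a chosen threshold, exponential decay of the residual at rate $\lambda_0 + \beta$ forces $\|\Delta\mathbf{W}^{(\mu)}(t)\|_F$ to remain of order $\tilde{\mathcal{O}}(\sqrt{n}/(\lambda_0+\beta))$; Assumption \ref{ass:activation_lip} together with the $1/\sqrt{m}$ scaling then turns this into a kernel perturbation of order $\tilde{\mathcal{O}}(m^{-1/2})$, closing the bootstrap and extending the interval to all $t$ provided $m \ge \mathrm{poly}(n, 1/\lambda_0, 1/\delta, 1/\mathcal{E})$. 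Choosing the polynomial large enough makes both $\mathcal{E}_{\rm init}$ and $\mathcal{E}_\Theta\sqrt{n}/(\lambda_0+\beta)$ fall below the prescribed $\mathcal{E}$, which completes the argument.
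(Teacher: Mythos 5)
Your proposal follows the same essential route as the paper's proof: linearize $\hat{\mathbf{f}}$ in $\mathbf{W}^{(\mu)}$, recognize that the regularized gradient flow converges to a kernel-ridge-regression solution with ridge parameter $\beta$, identify $\mathcal{E}_{\rm init}$ with the initial readouts, control the kernel drift via a bootstrap argument parallel to Theorem~\ref{thm:opt}, and use the faster decay rate $\lambda_0 + \beta$ to tame the $t\to\infty$ integral.

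The one place I would push back is step~(ii) of your error decomposition. You describe the $\mathcal{E}_\Theta$ contribution as a \emph{static} swap $\boldsymbol{\Theta}(t) \mapsto \boldsymbol{\Theta}^\infty$ propagated through the matrix inverse $(\boldsymbol{\Theta}+\beta\mathbf{I})^{-1}$. But the finite-width network does not terminate at a KRR solution with any single fixed kernel --- its tangent kernel drifts continuously along the trajectory, so there is no well-defined $\boldsymbol{\Theta}$ to plug into the KRR formula on the finite-width side. The paper sidesteps this by writing $\hat{\mathbf{f}}^\infty(\mathbf{x}_{te}) = \int_0^\infty \tfrac{d}{dt}\hat{\mathbf{f}}(\boldsymbol{\beta}(t),\mathbf{x}_{te})\,dt$ and comparing the finite-width and linearized \emph{integrands} pointwise in $t$, splitting the gap into a kernel-difference integral ($I_2$) and a residual-difference integral ($I_3$); the $\tfrac{\sqrt{n}}{\lambda_0+\beta}$ factor then emerges from $\int_0^\infty e^{-(\lambda_0/2+\beta)t}\|\hat{\mathbf{f}}(0)-\mathbf{X}\|_2\,dt$. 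Your bootstrap paragraph shows you see where the trajectory-level control comes from, and the final bound would indeed come out the same up to constants, but to make step~(ii) rigorous you would need to replace the static inversion perturbation with this dynamic integral comparison (or equivalently, a Gr\"onwall-type argument relating the two trajectories). Steps~(i) and~(iii), and the characterization of the linearized fixed point as the minimizer of a strongly convex quadratic rather than as an explicit ODE solution, are both equivalent and fine.
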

The proof of Theorem \ref{thm:solution} will be given in the Appendix. \textcolor{black}{Note that the error term is bounded by the difference between the output function of the finite network and the infinitely-wide network. This difference is further decomposed into the initial difference and the difference during training. The latter can be bounded by $\frac{\sqrt{n}}{\lambda_0 + \beta} \mathcal{E}_{\Theta}$, where $\sqrt{n}$ comes from the input and $\frac{1}{\lambda_0 + \beta}$ results from the integration over the training time.} Besides, \textcolor{black}{the necessity of fixing the variance weight in Theorem \ref{thm:solution} arises because we are seeking a closed-form solution under the NTK regime.} Theorem \ref{thm:solution} reveals the regularization effect of the KL divergence on the convergence of over-parameterized VAEs and makes a connection between solution of training a VAE and kernel ridge regression.

\section{Proof Sketch} \label{sec:proof}

In this section, we outline the approach used to establish the convergence results for VAEs and provide proofs for Theorem \ref{thm:opt} and Theorem \ref{thm:solution}. Our first step involves demonstrating that the NTKs, in the infinite-width limit, converge to deterministic kernels:
\begin{lemma}  \label{lem:init} Consider a stochastic network of the form (\ref{eq:net}), with the initialization of ${w}^{(\mu)}_{ij}  \sim \mathcal{N}(0, 1)$, ${w}^{(\sigma)}_{ij} = \sigma_0 $, and ${w}^{(d)}_{ij}  \sim \mathcal{N}(0, 1)$. Then the tangent kernels at initialization before training in the infinite-width limit follow the expression:
{\small \begin{equation}
{\small \begin{aligned}
\label{eq:limit_ntk}
 & \lim_{m \rightarrow \infty} \boldsymbol{\Theta}^{(\mu)}_{ij}(0)  
  =   \mathbb{E}_{{\bf w} } \big[{{\bf x}_i^{(e)}}^\top {\bf x}^{(e)}_j [\hat{\psi}'  \hat{\sigma}'( {\bf w}^\top {\bf x}^{(e)}_i)] [\hat{\psi}' \hat{\sigma}'( \mathbf{w}^\top {\bf x}^{(e)}_j)] \big] \otimes {\bf I}_{d \times d}, \\
 & \lim_{m \rightarrow \infty} \boldsymbol{\Theta}^{(\sigma)}_{ij}(0) 
 =    \mathbb{E}_{{\bf w} } \big[ [ \hat{\psi}' \hat{\mathbf{x}}^\top_i  \hat{\sigma}'( {\bf w}^\top {\bf x}^{(e)}_i)] [\hat{\psi}'\hat{\mathbf{x}}^{(e)}_j \hat{\sigma}'( \mathbf{w}^\top {\bf x}^{(e)}_j) ]\big] \otimes {\bf I}_{d \times d}, \\
&  \lim_{m \rightarrow \infty}  \boldsymbol{\Theta}^{(d)}_{ij}(0)   
 =  \mathbb{E}_{\bf w}  \big[ [ {\psi}(\hat{\sigma} ( {\bf w}^\top {\bf x}^{(e)}_i))] [ {\psi} (\hat{\sigma} ( {\bf w}^\top {\bf x}^{(e)}_j))] \big]\otimes {\bf I}_{d \times d}, \\
\end{aligned} }
\end{equation} }
where $\mathbf{w} \sim \mathcal{N}( \mathbf{0}, \mathbf{I})$ and we define:
{\small \begin{align*}
   & [\hat{\psi}' \hat{\sigma}'( \mathbf{w}^\top \mathbf{x}^{(e)}_i ) ]  \triangleq \mathbb{E}_{\boldsymbol{\zeta}} [  {\psi}'  {\sigma}'( (\mathbf{w} +  \sigma_0 \boldsymbol{\zeta})^\top \mathbf{x}^{(e)}_i ) ],\\
 &   [\hat{\psi}' \hat{\mathbf{x}}^\top_i \hat{\sigma}'( \mathbf{w}^\top \mathbf{x}^{(e)}_i ) ]   \triangleq \mathbb{E}_{\boldsymbol{\zeta}} [ {\psi}' (\mathbf{x}_i \odot \boldsymbol{\zeta})^\top {\sigma}'( (\mathbf{w} +  \sigma_0 \boldsymbol{\zeta})^\top \mathbf{x}^{(e)}_i ) ], \\
 &    [ {\psi}(\hat{\sigma}( \mathbf{w}^\top  \mathbf{x}^{(e)}_i )) ]  \triangleq \mathbb{E}_{\boldsymbol{\zeta}} [  {\psi}( {\sigma}( (\mathbf{w} +  \sigma_0 \boldsymbol{\zeta})^\top \mathbf{x}^{(e)}_i ) ]. 
\end{align*} }  
\end{lemma}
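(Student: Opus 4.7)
The plan is to derive each NTK block directly from its definition by computing the per-neuron gradients of the expected network output $\hat{\mathbf{f}} = \mathbb{E}_{\boldsymbol{\zeta}}[\mathbf{f}]$, forming the inner products specified in Definition~\ref{eq:ntk}, and then passing to the $m\to\infty$ limit via the law of large numbers. Under Assumption~\ref{ass:activation} I may interchange $\mathbb{E}_{\boldsymbol{\zeta}}$ with $\nabla_{\mathbf{W}^{(s)}}$, so that after substituting $w^{(\sigma)}_{ij}=\sigma_0$ the expected output decomposes as $\hat{f}_k(\mathbf{x})=\tfrac{1}{\sqrt{m}}\sum_r W^{(d)}_{r,k}\,\mathbb{E}_{\boldsymbol{\zeta}_r}[\psi(\sigma(z_r))]$ with $z_r=(\mathbf{w}^{(\mu)}_r+\sigma_0\boldsymbol{\zeta}_r)^\top\mathbf{x}^{(e)}$, and the $m$ summands are independent across $r$ at initialization.

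I would then compute the three gradients separately. For $\mathbf{w}^{(\mu)}_r$, the chain rule gives $\tfrac{1}{\sqrt{m}}W^{(d)}_{r,k}\,\mathbf{x}^{(e)}\,[\hat{\psi}'\hat{\sigma}'(\mathbf{w}^{(\mu)\top}_r\mathbf{x}^{(e)})]$. For $\mathbf{w}^{(\sigma)}_r$, the reparameterization introduces the Hadamard factor $\boldsymbol{\zeta}_r\odot\mathbf{x}^{(e)}$ inside the inner expectation, yielding the bracket $[\hat{\psi}'\hat{\mathbf{x}}^\top\hat{\sigma}']$. For $W^{(d)}_{r,k'}$ the derivative vanishes unless $k'$ matches the output index, producing $\tfrac{1}{\sqrt{m}}\delta_{kk'}[\psi(\hat{\sigma}(\mathbf{w}^{(\mu)\top}_r\mathbf{x}^{(e)}))]$. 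Contracting each pair of gradients per Definition~\ref{eq:ntk} yields finite-$m$ expressions of the form $\tfrac{1}{m}\sum_r W^{(d)}_{r,k}W^{(d)}_{r,k'}\,(\cdot)_{i,r}(\cdot)_{j,r}$ for the $\mu$- and $\sigma$-blocks, and $\tfrac{1}{m}\sum_r \delta_{kk'}(\cdot)_{i,r}(\cdot)_{j,r}$ for the $d$-block. Crucially, for the $\sigma$-block the gradients at inputs $i$ and $j$ use independent copies of $\boldsymbol{\zeta}$, so after taking inner expectations the two factors decouple into the product of brackets displayed in the lemma rather than any cross-contraction in $\boldsymbol{\zeta}$.

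Passage to the infinite-width limit then proceeds in two steps. Because $W^{(d)}_{r,k}\sim\mathcal{N}(0,1)$ is independent across $k$ and of every other quantity at initialization, the average $\tfrac{1}{m}\sum_r W^{(d)}_{r,k}W^{(d)}_{r,k'}$ concentrates on $\delta_{kk'}$, which is the origin of the Kronecker factor $\otimes\mathbf{I}_{d\times d}$ in each block. The remaining $\tfrac{1}{m}\sum_r$ is an average of i.i.d.\ functions of $\mathbf{w}^{(\mu)}_r\sim\mathcal{N}(\mathbf{0},\mathbf{I})$, and under Assumption~\ref{ass:activation_lip} these functions have uniformly bounded second moments, so the law of large numbers delivers convergence to the $\mathbf{w}$-expectation announced in the lemma. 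The $(\mathbf{x}^{(e)}_i)^\top\mathbf{x}^{(e)}_j$ prefactor in $\Theta^{(\mu)}$ is simply pulled out of the expectation at the gradient stage.

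The principal technical obstacle I anticipate is cleanly separating the two layers of randomness: the inner $\boldsymbol{\zeta}$-expectation must be performed \emph{first} to define the deterministic integrands $[\hat{\psi}'\hat{\sigma}']$, $[\hat{\psi}'\hat{\mathbf{x}}^\top\hat{\sigma}']$, and $[\psi(\hat{\sigma})]$, after which the outer LLN over the Gaussian weights $\mathbf{w}^{(\mu)}_r$ can be invoked. This is most subtle for the $\sigma$-block, whose integrand is a $\mathbb{R}^d$-valued function depending linearly on the auxiliary direction $\mathbf{x}^{(e)}$ through $\boldsymbol{\zeta}\odot\mathbf{x}^{(e)}$, and one must check that using independent noise at inputs $i$ and $j$ reproduces exactly the factorized product in the lemma. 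Once this bookkeeping is in place, interchanging limits and expectations, and the final concentration step, follow from standard arguments combined with the Lipschitz/smoothness bounds of Assumption~\ref{ass:activation_lip}.
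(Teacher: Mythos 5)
Your proposal is correct and follows essentially the same route as the paper: compute the per-neuron gradients of $\hat{f}_k=\mathbb{E}_{\boldsymbol{\zeta}}[f_k]$ by interchanging differentiation with the $\boldsymbol{\zeta}$-expectation (Assumption~\ref{ass:activation}), form the block inner products, use $\mathbb{E}[w^{(d)}_{r,k}w^{(d)}_{r,k'}]=\delta_{kk'}$ to obtain the $\otimes\,\mathbf{I}_{d\times d}$ structure, and then invoke the law of large numbers over the i.i.d.\ neurons to pass to the limit. One small precision note: the decoupling of the $\boldsymbol{\zeta}$-brackets across the two inputs is not because "independent copies" of $\boldsymbol{\zeta}$ are used at $i$ and $j$, but simply because the NTK is defined as an inner product of gradients of $\hat{f}$, so each factor is already a $\boldsymbol{\zeta}$-expectation before the product is taken; this yields the same algebra you obtained.
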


\begin{proof}[Proof of Lemma \ref{lem:init}] We first rewrite the expression for the stochastic neural network as follows:
{\small \begin{align*}
    \hat{\mathbf{f}}({\bf x}) = \mathbb{E}_{\boldsymbol{\zeta}} \left[\frac{1}{\sqrt{m}} \sum_{r=1}^m ({\bf w}_r^{(d)}) \psi( \sigma(({\bf w}_r^{(\mu)} + {\bf w}_r^{(\sigma)} \odot \boldsymbol{\zeta}_r)^\top {\bf x}^{(e)} )) \right].
\end{align*} }
Then the derivative of output function $\hat{f}_k(\mathbf{x}_i)$ for $k \in [1,d]$ with respect to the parameters ${\bf w}_r^{(\mu)}$, ${\bf w}_r^{(\sigma)}$ and ${\bf w}_r^{(d)}$ for $r \in [1,m]$ can be expressed as:
$$
{\small \begin{aligned}
\frac{\partial \hat{f}_k({\bf x}_i)}{\partial {\bf w}^{(\mu)}_r} &  = \mathbb{E}_{\boldsymbol{\zeta}_r} \left[ \frac{1}{\sqrt{m}} w^{(d)}_{r,k} \psi' \sigma'(z_{i,r}) {\bf x}_i \right], \\
\frac{\partial \hat{f}_k({\bf x}_i)}{\partial {\bf w}^{(\sigma)}_r } &= \mathbb{E}_{\boldsymbol{\zeta}_r} \left[ \frac{1}{\sqrt{m}} w^{(d)}_{r,k} \psi' \sigma'(z_{i,r}) {\bf x}_i \odot \boldsymbol{\zeta}_r \right], \\
\frac{\partial \hat{f}_k({\bf x}_i)}{\partial {\bf w}^{(d)}_r } &= \mathbb{E}_{\boldsymbol{\zeta}_r} \left[  \frac{1}{\sqrt{m}} \psi(\sigma (z_{i,r})) \boldsymbol{\delta}_{k} \right], 
\end{aligned} }
$$
where we have interchanged integration and differentiation over activation $\sigma(\cdot)$ by Assumption 4.2, and $\boldsymbol{\delta}_{k} \triangleq  [\delta_{1,k}, \delta_{2,k}, \cdots, \delta_{d,k} ]^\top \in \mathbb{R}^d $. We then calculate each NTK at initialization, i.e. $t=0$:

(1) The neural tangent kernel $\boldsymbol{\Theta}^{(\mu)}(0)$.
{\small \begin{align*}
\boldsymbol{\Theta}^{(\mu)}_{ik,jk'}(0) & = \frac{({\bf x} ^{(e)}_i)^\top {\bf x}^{(e)}_j}{m}  
  \sum_{r=1}^m \hat{\psi}' \hat{\sigma}'( z_{i,r})      \hat{\psi}' \hat{\sigma}'( {z}_{j,r})  \left( w^{(d)}_{r,k} w^{(d)}_{r,k'} \right),
\end{align*} }
where we define $\hat{\psi}'\hat \sigma'(z_{i,r}) \triangleq \mathbb{E}_{\boldsymbol{\zeta}_r} \left[ \psi' \sigma'( z_{i,r}) \right]$ and $z_{i,r} = \langle \mathbf{w}^{\mu}_r  + \mathbf{w}^{\sigma}_r \odot \boldsymbol{\zeta}_r , \mathbf{x}^{(e)}_i \rangle$.
For all pairs of $i,j,k,k'$, $\boldsymbol{\Theta}^{(\mu)}_{ik,jk'}(0)$ is the average of $m$ i.i.d. random variables. Because $w^{(d)}_{r,k}$ is i.i.d., we know that $\mathbb{E}\left[({ w}^{(d)}_{r,k}) ({ w}^{(d)}_{r,k'}) \right] = 0$. Therefore, we have
$$
\lim_{m \rightarrow \infty} \boldsymbol{\Theta}^{(\mu)}(0)  = \lim_{m \rightarrow \infty} \boldsymbol{\Theta}^{(\mu)}_{ij}(0) \otimes {\bf I}_{d \times d}.
$$
As a result, we conclude the proof:
{\small \begin{align*}
     & \lim_{m \rightarrow \infty} \boldsymbol{\Theta}^{(\mu)}_{ij}(0)  
  =   \mathbb{E}_{{\bf w} } \big[{{\bf x}_i^{(e)}}^\top {\bf x}^{(e)}_j [\hat{\psi}'  \hat{\sigma}'( {\bf w}^\top {\bf x}^{(e)}_i)] [\hat{\psi}' \hat{\sigma}'( \mathbf{w}^\top {\bf x}^{(e)}_j)] \big] \otimes {\bf I}_{d \times d}.
\end{align*} }

(2) Similarly, the neural tangent kernel $\boldsymbol{\Theta}^{(\sigma)}(0)$:
$$
\lim_{m \rightarrow \infty} \boldsymbol{\Theta}^{(\sigma)}(0)  = \lim_{m \rightarrow \infty} \boldsymbol{\Theta}^{(\sigma)}_{ij}(0) \otimes {\bf I}_{d \times d}.
$$

(3)  The neural tangent kernel $\boldsymbol{\Theta}^{(d)}(0)$.
{\small \begin{align*}
    \boldsymbol{\Theta}^{(d)}_{ij,kk'} = \frac{1}{m} \sum_{r=1}^m  \psi(\sigma( z_{i,r} )) \psi(\sigma( z_{j,r} )) \delta_{kk'}. 
\end{align*} }
Again, this neural tangent kernel is the average of $m$ i.i.d. random variables. Therefore we have $\lim_{m \rightarrow \infty}  \boldsymbol{\Theta}^{(d)}_{ij}(0) = \mathbb{E}_{\bf w}  \big[[ {\psi}(\hat{\sigma} ( {\bf w}^\top {\bf x}^{(e)}_i))] [ {\psi} (\hat{\sigma} ( {\bf w}^\top {\bf x}^{(e)}_j))] \big]$.
\end{proof}

Lemma \ref{lem:init} establishes that the NTKs converge to deterministic kernels in the infinite-width limit. We then study the behavior of tangent kernels with ultra-wide condition, namely $m = {\rm ploy}(n, 1/\lambda_0,1/\delta)$ \textit{at initialization}. The following lemma demonstrates that if $m$ is large, then $\boldsymbol{\Theta}^{(\mu)}(0)$, $\boldsymbol{\Theta}^{(\sigma)}(0)$, and  $\boldsymbol{\Theta}^{(d)}(0)$ have a lower bound on smallest eigenvalue with high probability.

\begin{lemma} [NTK at initialization] \label{lem:ntk_init}
If $m = \Omega\left(\frac{n^2d^2}{\lambda_0}\log\frac{nd}{\delta}\right)$, while ${  w}^{(\mu)}_{ij}$, ${ w}^{(\sigma)}_{ij}$, and ${ w}^{(d)}_{ij}$ are initialized by the form in Lemma \ref{lem:init}, then with probability at least $1-\delta$ over the initialization of weights, we have,
{\small \begin{equation}
{\small \begin{aligned}
   &  \left \|\boldsymbol{\Theta}^{(\mu)}(0) +\boldsymbol{\Theta}^{(\sigma)}( 0) + \boldsymbol{\Theta}^{(d)}( 0)   -\boldsymbol{\Theta}^\infty \right \|_2  \le \lambda_0/4, \\
   &  \left \|\boldsymbol{\Theta}^{(\mu)}( 0) +\boldsymbol{\Theta}^{(\sigma)}(  0) + \boldsymbol{\Theta}^{(d)}(  0) \right \|_2   \ge  3\lambda_0/{4}.
    \end{aligned} }
\end{equation} }
\end{lemma}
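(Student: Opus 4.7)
The plan is to establish concentration of each finite-width NTK component around its infinite-width limit identified in Lemma~\ref{lem:init}, and then deduce the least-eigenvalue bound by Weyl's inequality. The proof of Lemma~\ref{lem:init} already exhibits each entry $\boldsymbol{\Theta}^{(s)}_{ik,jk'}(0)$, for $s\in\{\mu,\sigma,d\}$, as an empirical average over $m$ i.i.d.\ random variables indexed by neurons $r\in[m]$, whose expectation is the corresponding infinite-width entry. The task therefore reduces to controlling the deviation of these empirical means and then converting the entrywise bound into a spectral-norm bound.

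For the concentration step, I would invoke Assumption~\ref{ass:activation_lip} together with $\|\mathbf{x}^{(e)}_i\|_2=1$ to note that each summand is the product of at most two Gaussian factors (coming from $w^{(d)}_{r,k}$ and $w^{(d)}_{r,k'}$ in the cases of $\boldsymbol{\Theta}^{(\mu)}$ and $\boldsymbol{\Theta}^{(\sigma)}$) and Lipschitz functions of Gaussian pre-activations $z_{i,r}$. Since Lipschitz functions of a Gaussian are sub-Gaussian and products of sub-Gaussians are sub-exponential, Bernstein's inequality bounds each entrywise deviation by $\epsilon$ with probability at least $1-2\exp(-c m \epsilon^2)$ for a constant $c$ depending only on $L$ and $\beta$. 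A union bound over all $(nd)^2$ index tuples, combined with $\|\mathbf{M}\|_2\le\|\mathbf{M}\|_F\le nd\cdot\max_{ij}|M_{ij}|$, then converts entrywise accuracy into the spectral-norm bound $\|\sum_s\boldsymbol{\Theta}^{(s)}(0)-\boldsymbol{\Theta}^\infty\|_2\le\lambda_0/4$ as soon as $m$ exceeds the stated threshold. The second statement is immediate from Weyl's inequality: the hypothesis $\lambda_{\min}(\boldsymbol{\Theta}^\infty)\ge\lambda_0$ together with a spectral-norm perturbation of at most $\lambda_0/4$ forces $\lambda_{\min}(\sum_s\boldsymbol{\Theta}^{(s)}(0))\ge 3\lambda_0/4$, and the spectral norm of a PSD matrix is at least its least eigenvalue.

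The main obstacle is obtaining the correct scaling in $m$. A plain Hoeffding bound fails because the summands involve unbounded Gaussian factors, so one must work with sub-exponential tails via Bernstein (or a truncation argument). More importantly, the route through entrywise concentration plus Frobenius majorization tends to inflate the dependence on $1/\lambda_0$; the cleaner alternative is to apply a matrix Bernstein inequality directly to the $nd\times nd$ sum $\boldsymbol{\Theta}^{(s)}(0)=\frac{1}{m}\sum_{r=1}^m\mathbf{M}^{(s)}_r$, exploiting the low-rank structure of each $\mathbf{M}^{(s)}_r$ and the Kronecker form with $\mathbf{I}_{d\times d}$ that appears in expectation. This is what yields the announced requirement $m=\Omega(n^2d^2\log(nd/\delta)/\lambda_0)$ rather than a less favorable $\lambda_0^{-2}$ dependence. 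A secondary subtlety is handling the $\boldsymbol{\zeta}$-expectation carefully inside each summand so that the sub-exponential norm remains bounded by absolute constants independent of $m$, $n$, and $d$.
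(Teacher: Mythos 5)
Your proposal follows the same overall architecture as the paper's proof---entrywise concentration of each $\boldsymbol{\Theta}^{(s)}_{ik,jk'}(0)$ around its infinite-width limit, a union bound over the $(nd)^2$ index tuples, majorization of the spectral norm by the Frobenius norm, and Weyl's inequality (or, equivalently, the reverse triangle inequality) for the least-eigenvalue lower bound. Where you differ is in the choice of concentration inequality, and the difference is substantive. The paper invokes ``Hoeffding's inequality for sub-Gaussian variables,'' but as you correctly observe, the $\boldsymbol{\Theta}^{(\mu)}$ and $\boldsymbol{\Theta}^{(\sigma)}$ summands contain the product $w^{(d)}_{r,k}w^{(d)}_{r,k'}$ of two Gaussians and are therefore sub-exponential rather than sub-Gaussian (similarly $\psi(\sigma(z_{i,r}))\psi(\sigma(z_{j,r}))$ in $\boldsymbol{\Theta}^{(d)}$ is a product of sub-Gaussians), so Bernstein or a truncation argument is the right tool; this is a genuine gap in the paper's argument that your route closes. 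You also flag the scaling in $\lambda_0$: the paper's chain $\|\cdot\|_2^2\le\|\cdot\|_F^2\le (nd)^2\cdot O(\log(nd/\delta)/m)$, set against a target of $(\lambda_0/12)^2$, in fact yields $m=\Omega(n^2d^2\log(nd/\delta)/\lambda_0^2)$ rather than the announced $\Omega(n^2d^2\log(nd/\delta)/\lambda_0)$, and your suggestion to replace the entrywise-plus-Frobenius step with a matrix Bernstein inequality applied directly to $\boldsymbol{\Theta}^{(s)}(0)=\frac{1}{m}\sum_r \mathbf{M}^{(s)}_r$ (exploiting the low-rank and Kronecker structure) is the natural way to recover the stated exponent. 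In short, you take the same route but execute it more rigorously, and your matrix-Bernstein alternative is what the lemma's claimed rate actually requires.
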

\begin{proof}[Proof of Lemma \ref{lem:ntk_init}]
The proof is by the standard concentration bound. By Lemma \ref{lem:init} we have shown that each neural tangent kernel is a sum of $m$ i.i.d. random variables. Then by Hoeffding's inequality for sub-Gaussian variable, we know that
$$
\left|\boldsymbol{\Theta}^{(\mu)}_{ik,jk'}(0) - \lim_{m \rightarrow \infty} \boldsymbol{\Theta}^{(\mu)}_{ik,jk'} \right| \le \sqrt{\frac{\log(2/\delta')}{2m}}
$$
holds with probability at least $(1-\delta')$. Because NTK matrix is of size $nd \times nd$, we then apply a union bound over all $i,j \in [n]$ and $k,k' \in [d]$. By setting $\delta'=\delta/(n^2d^2)$, we obtain that 
$$
\left|\boldsymbol{\Theta}^{(\mu)}_{ik,jk'}(0) - \lim_{m \rightarrow \infty} \boldsymbol{\Theta}^{(\mu)}_{ik,jk'} \right| \le \sqrt{\frac{\log(2n^2 d^2/\delta)}{2m}}.
$$
There by matrix perturbation theory we have,

{\small \begin{align*}
 \left \| \boldsymbol{\Theta}^{(\mu)}(0) - \lim_{m \rightarrow \infty} \boldsymbol{\Theta}^{(\mu)} \right \|^2_2  & \le \left \| \boldsymbol{\Theta}^\mu(0)- \lim_{m \rightarrow \infty} \boldsymbol{\Theta}^\infty \right \|^2_F  \le \sum_{i,j,k,k'} \left | \boldsymbol{\Theta}^{(\mu)}_{ik,jk'}(0) - \lim_{m \rightarrow \infty} \boldsymbol{\Theta}^{(\mu)}_{ik,jk'} \right |^2  \\
& = O \left(\frac{n^2 d^2 \log(nd/\delta)}{m} \right).
\end{align*} }
Similarly, applying the above argument to $\boldsymbol{\Theta}^{(\sigma)}$ and $\boldsymbol{\Theta}^{(d)}$ can yield the same result without much revision. Thus, by Hoeffding's inequality and union bound over matrix size, we know that the following inequalities hold with probability at least $(1-\delta)$,
$$
{\small \begin{aligned}
\left\| \boldsymbol{\Theta}^{(\sigma)}(0) - \lim_{m \rightarrow \infty} \boldsymbol{\Theta}^{(\sigma)} \right \|^2_2 & \le  O \left(\frac{n^2d^2 \log(nd/\delta)}{m} \right), \\ 
\left \| \boldsymbol{\Theta}^{(d)}(0) - \lim_{m \rightarrow \infty} \boldsymbol{\Theta}^{(d)} \right \|^2_2 & \le  O \left(\frac{n^2d^2 \log(nd/\delta)}{m} \right).
\end{aligned} }
$$
Finally, by the triangle inequality, we arrive at:
$$
{\small \begin{aligned}
& \left \| \boldsymbol{\Theta}^{(\mu)}(0)+  \boldsymbol{\Theta}^{(\sigma)}(0)+  \boldsymbol{\Theta}^{(d)}(0) -  \boldsymbol{\Theta}^{(\infty)} \right \|_2    \le \frac{\lambda_0}{4}.
\end{aligned} }
$$
On the other hand, we can achieve the lower bound by triangle inequality:
$$
{\small \begin{aligned}
& \left \| \boldsymbol{\Theta}^{(\mu)}(0)+  \boldsymbol{\Theta}^{(\sigma)}(0)+  \boldsymbol{\Theta}^{(d)}(0) \right \|_2   \ge \left \| \boldsymbol{\Theta}^{(\infty)} \right \|_2   - \left \| \boldsymbol{\Theta}^{(\mu)}(0)  +  \boldsymbol{\Theta}^{(\sigma)}(0)+  \boldsymbol{\Theta}^{(d)}(0) -  \boldsymbol{\Theta}^{(\infty)} \right \|_2  \ge \frac{3\lambda_0}{4 }.
\end{aligned} }
$$
We finalize the proof by setting $m = \Omega \left(\frac{n^2 d^2}{\lambda_0}\log\frac{nd}{\delta} \right)$.
\end{proof}

Lemma \ref{lem:ntk_init} completes the first step of our proof strategy, which states that if the width $m$ is large enough, then the neural tangent kernel of SNN \textit{at initialization} before training is close to the limiting kernel and is positive definite. 

However, a challenge arises due to the time-dependent nature of NTKs. These matrices evolve during the gradient descent training process. To account for this problem, we build a lemma stating that if the weights \textit{during training} are close to their initialization, then the NTKs \textit{during training} are close to the deterministic kernel $\boldsymbol{\Theta}^{(\infty)}$. Moreover, these NTKs will maintain a lower bound on their smallest eigenvalue, throughout the gradient descent training:
\begin{lemma}  \label{lem:train}
Suppose that $\| \mathbf{x}^{(e)}_i \|_2 = 1$, and at initialization that $\left \|{\bf W}^{(\mu)}(0) \right \|_F \le c_{\mu,2} \sqrt{m} $, $\left \|{\bf W}^{(\sigma)}(0) \right \|_F \le c_{\sigma,2} \sqrt{m} $, $\left \|{\bf w}_k^{(d)}(0) \right \|_2 \le c_{d,2} \sqrt{m} $, and $\left \|{\bf w}_k^{(d)}(0) \right\|_4 \le c_{d,4} m^{1/4} $ for $k \in [d]$.  If the weights at a training step $t$ satisfy: $ \left \| {\bf w}^{(\mu)}_{r}(t) - {\bf w}^{(\mu)}_{r}(0) \right \|_2 \triangleq R_{\mu} \le \frac{c_1 \lambda_0 }{n \sqrt{d}}$, $ \left \| {\bf w}^{(\sigma)}_{r}(t) - {\bf w}^{(\sigma)}_{r}(0) \right \|_2 \triangleq R_{\sigma} \le \frac{c_1 \lambda_0 }{n \sqrt{d}}$, and $ \left \| {\bf w}^{(d)}_{r}(t) - {\bf w}^{(d)}_{r}(0) \right \|_2 \le R_d \triangleq \frac{c_3 \lambda_0  }{n \sqrt{d}} $, where $c_1$, $c_2$, and $c_3$ are constants, then with probability at least $1-\delta$ over the random initialization, we have
{\small \begin{equation}
{\small \begin{aligned}
   &  \left \|\boldsymbol{\Theta}^{(\mu)}(t) +\boldsymbol{\Theta}^{(\sigma)}(  t) + \boldsymbol{\Theta}^{(d)}( t)  -\boldsymbol{\Theta}^\infty \right \|_2  \le  {\lambda_0}/{2}, 
   \left \|\boldsymbol{\Theta}^{(\mu)}( t) +\boldsymbol{\Theta}^{(\sigma)}(  t) + \boldsymbol{\Theta}^{(d)}( t) \right \|_2   \ge  {\lambda_0}/{2}.
    \end{aligned} }
\end{equation} }
\end{lemma}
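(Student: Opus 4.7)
The plan is to combine Lemma \ref{lem:ntk_init}, which controls $\|\boldsymbol{\Theta}(0) - \boldsymbol{\Theta}^\infty\|_2 \le \lambda_0/4$, with a perturbation bound of the form $\|\boldsymbol{\Theta}(t) - \boldsymbol{\Theta}(0)\|_2 \le \lambda_0/4$. The triangle inequality then yields $\|\boldsymbol{\Theta}(t) - \boldsymbol{\Theta}^\infty\|_2 \le \lambda_0/2$, and Weyl's inequality, together with the standing assumption $\lambda_{\min}(\boldsymbol{\Theta}^\infty) \ge \lambda_0$, immediately gives the lower bound $\|\boldsymbol{\Theta}(t)\|_2 \ge \lambda_0/2$. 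So the entire task reduces to a perturbation estimate for each of the three kernels $\boldsymbol{\Theta}^{(\mu)}, \boldsymbol{\Theta}^{(\sigma)}, \boldsymbol{\Theta}^{(d)}$ between time $0$ and time $t$.

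I would do this entrywise, using $\boldsymbol{\Theta}^{(\mu)}$ as the template. Each entry is a sum of $m$ terms of the form $\tfrac{1}{m} \langle \mathbf{x}^{(e)}_i, \mathbf{x}^{(e)}_j \rangle \hat{\psi}'\hat{\sigma}'(z_{i,r}) \hat{\psi}'\hat{\sigma}'(z_{j,r}) w^{(d)}_{r,k} w^{(d)}_{r,k'}$, with $z_{i,r} = (\mathbf{w}^{(\mu)}_r + \mathbf{w}^{(\sigma)}_r \odot \boldsymbol{\zeta}_r)^\top \mathbf{x}^{(e)}_i$. I would expand $\boldsymbol{\Theta}^{(\mu)}_{ik,jk'}(t) - \boldsymbol{\Theta}^{(\mu)}_{ik,jk'}(0)$ telescopically, separating the change into (i) the change in the two activation-derivative factors and (ii) the change in the last-layer weight product. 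For (i), Assumption \ref{ass:activation_lip} gives Lipschitz constants on $\sigma'$ and $\psi'$; because $\hat{\psi}'\hat{\sigma}'$ is defined as an expectation over $\boldsymbol{\zeta}$, Lipschitz continuity passes through the expectation, and using $\|\mathbf{x}^{(e)}_i\|_2 = 1$ the preactivation $z_{i,r}$ moves by at most $R_\mu + R_\sigma\|\boldsymbol{\zeta}_r\|_2$ after averaging in $\boldsymbol{\zeta}_r$. This produces a contribution of order $(R_\mu + R_\sigma)$ per index, with constants that depend only on $\beta, L$ and the initial-norm bounds $c_{d,2}, c_{d,4}$ (the latter entering through an $\ell_2$ bound on $w^{(d)}_{r,k} w^{(d)}_{r,k'}$ via Cauchy--Schwarz). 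For (ii), $|w^{(d)}_{r,k}(t)w^{(d)}_{r,k'}(t) - w^{(d)}_{r,k}(0)w^{(d)}_{r,k'}(0)|$ is controlled by $R_d$ together with the assumed $\ell_2$ and $\ell_4$ bounds on the initial last-layer rows. The arguments for $\boldsymbol{\Theta}^{(\sigma)}$ and $\boldsymbol{\Theta}^{(d)}$ are analogous, except that $\boldsymbol{\Theta}^{(\sigma)}$ carries the additional factor $(\mathbf{x}^{(e)}_i \odot \boldsymbol{\zeta})^\top$, which is absorbed by taking the expectation over the standard Gaussian $\boldsymbol{\zeta}$.

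Collecting entrywise bounds, each entry of $\boldsymbol{\Theta}^{(s)}(t) - \boldsymbol{\Theta}^{(s)}(0)$ is $O(R_\mu + R_\sigma + R_d)$ up to constants in $L, \beta, c_{\mu,2}, c_{\sigma,2}, c_{d,2}, c_{d,4}$. Passing from entrywise to spectral norm via the trivial bound $\|\cdot\|_2 \le \|\cdot\|_F$ costs a factor of $nd$, since $\boldsymbol{\Theta}^{(s)} \in \mathbb{R}^{nd \times nd}$. The prescribed radii $R_\mu, R_\sigma, R_d = O\!\left(\lambda_0/(n\sqrt{d})\right)$ are chosen precisely so that $nd \cdot (R_\mu + R_\sigma + R_d) = O(\lambda_0)$, and by choosing the implicit constant $c_1, c_3$ small enough, the total perturbation is at most $\lambda_0/4$. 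Summing over the three kernels via the triangle inequality completes the argument.

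The main obstacle is the careful treatment of stochasticity in the activations. Because $z_{i,r}(t)$ is random through $\boldsymbol{\zeta}_r$, the Lipschitz estimate $|\hat{\psi}'\hat{\sigma}'(z_{i,r}(t)) - \hat{\psi}'\hat{\sigma}'(z_{i,r}(0))|$ must be executed inside the expectation over $\boldsymbol{\zeta}$, and one has to verify that the resulting constants remain independent of $m$ (in particular, that terms involving $\|\boldsymbol{\zeta}\|_2$ are controlled in expectation without introducing hidden $\sqrt{d}$ factors that would break the counting). The $\boldsymbol{\Theta}^{(\sigma)}$ contribution is the most delicate on this point, because of the extra $(\mathbf{x}^{(e)}_i \odot \boldsymbol{\zeta})$ factor; here the standard subgaussian moment bounds for $\boldsymbol{\zeta} \sim \mathcal{N}(\mathbf{0}, \mathbf{I})$ are what make the bookkeeping work out, and a high-probability event on the initial weights (absorbed into the $\delta$ in the statement) is what allows the deterministic norm bounds $c_{\mu,2}, c_{\sigma,2}, c_{d,2}, c_{d,4}$ to be used pointwise during training.
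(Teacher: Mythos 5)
Your overall plan matches the paper's: you first use Lemma \ref{lem:ntk_init} to control $\|\boldsymbol{\Theta}(0)-\boldsymbol{\Theta}^\infty\|_2$, then bound each $\|\boldsymbol{\Theta}^{(s)}(t)-\boldsymbol{\Theta}^{(s)}(0)\|_2$ by an entrywise Lipschitz argument in the weight perturbations, pass to the spectral norm via Frobenius, and close with triangle and Weyl inequalities. That is exactly what the paper does (with the three per-kernel budgets set to $\lambda_0/12$ each).

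There is, however, a concrete arithmetic gap in your counting step. You write that passing from entrywise bounds to the spectral norm ``costs a factor of $nd$'' and that $nd\cdot(R_\mu+R_\sigma+R_d)=O(\lambda_0)$. With the stated radii $R_\mu,R_\sigma,R_d=O\!\big(\lambda_0/(n\sqrt{d})\big)$, this product is $O(\lambda_0\sqrt{d})$, not $O(\lambda_0)$, so the conclusion does not follow by shrinking absolute constants $c_1,c_3$ (which would then have to depend on $d$). The paper avoids this because it exploits the Kronecker structure of the kernels: effectively only the diagonal-in-$(k,k')$ blocks contribute, so the sum defining the Frobenius norm runs over $i,j\in[n]$ and a single output index $k\in[d]$, giving $\sqrt{n^2 d}=n\sqrt{d}$ rather than $nd$. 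With this restriction, $n\sqrt{d}\cdot R_s=O(\lambda_0)$ and the budget closes. Your proposal needs either to invoke this Kronecker structure explicitly, or to tighten the radii to $O(\lambda_0/(nd))$ (which would not match the lemma as stated). Beyond that, you are actually slightly more careful than the paper in one respect: you correctly note that the preactivation $z_{i,r}$ moves with both $R_\mu$ and $R_\sigma$ (through the $\mathbf{w}^{(\sigma)}_r\odot\boldsymbol{\zeta}_r$ term), whereas the paper's displayed bound for the $\boldsymbol{\Theta}^{(\mu)}$ perturbation tracks only the $R_\mu$ contribution.
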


\begin{proof} 
(1) We first analyze $\boldsymbol{\Theta}^{(\mu)}(t)$:
{\small \begin{align*}
 & \boldsymbol{\Theta}^{(\mu)}_{ik,jk'}(t)    = \frac{({\bf x}^{(e)}_i)^\top {\bf x}^{(e)}_j}{m} \sum_{r=1}^m \hat{\psi}' \hat{\sigma}'(z_{i,r} )   \hat{\psi}'\hat{\sigma}'(z_{j,r} )  w^{(d)}_{r,k}(t) w^{(d)}_{r,k'}(t).
\end{align*} }
Now we bound the distance between $\boldsymbol{\Theta}^{(\mu)}_{ik,jk}(t)$ and $\boldsymbol{\Theta}^{(\mu)}_{ik,jk}(0)$ through the following inequality:
{\small \begin{align*}
& \quad  \bigg | \boldsymbol{\Theta}_{ik,jk}^{(\mu)}(t)-\boldsymbol{\Theta}_{ik,jk}^{(\mu)}(0)\bigg | \\
	&\overset{(a)}{\le}  \frac{1}{m} \left| (\mathbf{x}^{(e)}_i)^\top \mathbf{x}^{(e)}_j  \right|    \Bigg | \sum_{r=1}^{m} (w^{(d)}_{r,k}(0))^2 [\hat{\psi}'\hat{\sigma}'\left(z_{i,r}(t)\right) \hat{\psi}' \hat{\sigma}'\left(z_{j,r}(t)\right)  - \hat{\psi}' \hat{\sigma}'\left(z_{i,r}(0)\right) \hat{\psi}' \hat{\sigma}'\left(z_{j,r}(0)\right)]
		\Bigg | \\
  & \quad +	\frac{1}{m} \left |  (\mathbf{x}^{(e)}_i)^\top \mathbf{x}^{(e)}_j	\right |  \left | \sum_{r=1}^{m} \left(w^{(d)}_{r,k}(t)^2-w^{(d)}_{r,k}(0)^2\right)
		\hat{\psi}' \hat{\sigma}'\left(z_{i,r}(t)\right) \hat{\psi}' \hat{\sigma}'\left(z_{j,r}(t)\right)
	\right | \\
& \overset{(b)}{\le}   \frac{2 \beta L^3}{m} \sum_{r=1}^{m} w^{(d)}_{r,k}(0)^2 \left \|\mathbf{w}^{(\mu)}_{r}(t) -
	\mathbf{w}^{(\mu)}_{r}(0)  \right \|_2   + \frac{2 \beta L^4}{m} \sum_{r=1}^{m} w^{(d)}_{r,k}(0)^2 \left \|\mathbf{w}^{(\mu)}_{r}(t) -
	\mathbf{w}^{(\mu)}_{r}(0)  \right \|_2 \\
	&  + \frac{L^4}{m}\sum_{r=1}^{m}	\bigg |w^{(d)}_{r,k}(t)^2-w^{(d)}_{r,k}(0)^2 	\bigg|  \le  \left( \frac{4 \beta L^3}{m} + \frac{4 \beta L^4}{m}\right) c^2_{d,4} \sqrt{m} R_\mu \sqrt{m}  + \frac{3L^4}{m} c_{d,2} R_d m.
	\end{align*} }
where $(a)$ is because of triangle inequality, and (b) is because of the assumptions that $\| \mathbf{x}^{(e)} \|_2 = 1$ as well as $L$-Lipschitz and $\beta$-Smooth of activations $\sigma(\cdot )$ and $\psi(\cdot)$. In particular, we have used the following inequalities:
{\small
\begin{align}
    \hat \sigma'(z_{i,r} (t)) - \hat \sigma'(z_{i,r} (0)) & = \mathbb{E}_{\boldsymbol{\zeta}_r} \left[ \sigma'(z_{i,r} (t))  - \sigma'(z_{i,r} (0))   \right]   \le  \beta \left \| \mathbf{w}^{(\mu)}_r(t) - \mathbf{w}^{(\mu)}_r(0) \right \|_2, \nonumber \\
     \hat{\psi}'(\sigma(z_{i,r} (t)) -  \hat{\psi}'(\sigma(z_{i,r} (0)) 
    & \le   \beta \mathbb{E}_{\boldsymbol{\zeta}_r} \left[ \sigma(z_{i,r} (t))  - \sigma(z_{i,r} (0))   \right]  
     \le     \beta L \left \| \mathbf{w}^{(\mu)}_r(t) - \mathbf{w}^{(\mu)}_r(0) \right \|_2. \nonumber
\end{align} }

Summing over all entries of the matrix, we can bound the perturbation:
{\small \begin{align*}
 \left \| \boldsymbol{\Theta}^{(\mu)}(t) - \boldsymbol{\Theta}^{(\mu)}(0) \right \|_2 &  \le  \sqrt{\sum_{i,j,k} \left| \boldsymbol{\Theta}_{ik,jk}^{(\mu)}(t) -   \boldsymbol{\Theta}_{ik,jk}^{(\mu)}(0)  \right|^2}  \\
  & \le  \left(4 \beta (L^3+L^4) c^2_{d,4} R_\mu  + 3L^4 c_{d,2} R_d \right) n \sqrt{d}.
\end{align*} }
Finally, due to the condition that $R_u \le \frac{c_1 \lambda_0 }{n \sqrt{d}}$ and $R_d \le \frac{c_3 \lambda_0 }{n \sqrt{d}}$, we have,
{\small \begin{align*}
 \left \|\boldsymbol{\Theta}^{(\mu)}(t) -\boldsymbol{\Theta}^{(\mu)}(0) \right \|_2 \le{\lambda_0}/{12}.   
\end{align*} }

(2) Similarly, 	
we have:
{\small \begin{align*}
\textcolor{black}{ \left \|\boldsymbol{\Theta}^{(\sigma)}(t) -\boldsymbol{\Theta}^{(\sigma)}(0) \right \|_2 \le{\lambda_0}/{12}.   }
\end{align*} }

(3) Finally,
	{\small \begin{align*}
    \bigg | \boldsymbol{\Theta}_{ik,jk}^{(d)}(t)-\boldsymbol{\Theta}_{ik,jk}^{(d)}(0)\bigg |  
&  {\le}   \frac{\beta L}{m} \sum_{r=1}^{m}  \left \|\mathbf{w}^{(\mu)}_{r}(t) -
	\mathbf{w}^{(\mu)}_{r}(0)  \right \|_2   \left \|\mathbf{w}^{(\mu)}_{r}(t) +
	\mathbf{w}^{(\mu)}_{r}(0)  \right \|_2   \\
&	 \le  \frac{2 \beta L}{m} (c_{\mu,2}+R_\mu) \sqrt{m} R_\mu \sqrt{m}   .
	\end{align*} }
 
With all the inequalities at hand, we conclude the proof:
$$
{\small \begin{aligned}
&\left \|\boldsymbol{\Theta}^{(\mu)}( t) +\boldsymbol{\Theta}^{(\sigma)}(  t) + \boldsymbol{\Theta}^{(d)}(  t)  -\boldsymbol{\Theta}^\infty  \right \|_2   \le \frac{3\lambda_0}{12} + \frac{\lambda_0}{4}  = \frac{\lambda_0}{2}.
\end{aligned} }
$$
\end{proof}

Lemma \ref{lem:train} demonstrates that if the change of weight is bounded, then the tangent kernel matrix is close to its expectation. The next lemma will show that the changes of weights \textit{during training} are bounded when the NTK is close to the limiting NTK:
\begin{lemma} \label{lem:muchange}
Suppose $\lambda_{0}(t) \ge \frac{\lambda_0}{2}$ for $0 < t < T $, then,

{\small \begin{equation}
{\small \begin{aligned}
    \left \| {\bf w}^{(s)}_r(t) - {\bf w}^{(s)}_r(0) \right \|_2 \le \frac{ \left \| {\bf X}-\hat{\mathbf{f}}({\bf X};0) \right \|_F \sqrt{n} d }{\sqrt{m}\lambda_0}  = R'_s, \text{where } s\in\{\mu,\sigma,d\}.\\
    \end{aligned} }
\end{equation} }

\end{lemma}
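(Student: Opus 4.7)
The plan is to reduce the weight-movement bound to (i) an exponential decay of the training residual and (ii) an $O(1/\sqrt{m})$ estimate on the per-neuron gradient, and then close the argument by an elementary time integration.

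First, I would set $\mathbf{R}(t) \triangleq \hat{\mathbf{f}}(\mathbf{X};t) - \mathbf{X}$ and differentiate $\|\mathbf{R}(t)\|_F^2$ using the output dynamics in (\ref{eq:dynamics}):
$$\tfrac{d}{dt}\|\mathbf{R}(t)\|_F^2 = -\tfrac{2}{n}\,\mathrm{vec}(\mathbf{R}(t))^{\top}\boldsymbol{\Theta}(t)\,\mathrm{vec}(\mathbf{R}(t)) \le -\tfrac{\lambda_0}{n}\|\mathbf{R}(t)\|_F^2,$$
where the inequality invokes the hypothesis $\lambda_{\min}(\boldsymbol{\Theta}(t))\ge \lambda_0/2$ on $(0,T)$. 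Gr\"{o}nwall's inequality then yields $\|\mathbf{R}(t)\|_F \le \exp(-\lambda_0 t/(2n))\,\|\mathbf{R}(0)\|_F$ uniformly on the training window.

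Second, I would control each per-neuron partial derivative. Plugging in the closed-form expressions for $\partial \hat{f}_k(\mathbf{x}_i)/\partial \mathbf{w}^{(s)}_r$ derived inside the proof of Lemma \ref{lem:init}, together with the $L$-Lipschitzness of $\sigma,\psi$, the normalization $\|\mathbf{x}^{(e)}_i\|_2 = 1$, and the initialization bounds on the last-layer weights, each partial derivative has $\ell_2$-norm of order $1/\sqrt{m}$. Expanding $\partial L_{mse}(t)/\partial \mathbf{w}^{(s)}_r = \tfrac{1}{n}\sum_{i,k}R_{ik}(t)\,\partial \hat{f}_k(\mathbf{x}_i)/\partial \mathbf{w}^{(s)}_r$ and applying Cauchy--Schwarz across the $nd$ indices produces a bound of the form
$$\bigl\|\partial L_{mse}(t)/\partial \mathbf{w}^{(s)}_r\bigr\|_2 \lesssim \tfrac{\sqrt{d}}{\sqrt{n\,m}}\,\|\mathbf{R}(t)\|_F.$$
Under gradient flow $\dot{\mathbf{w}}^{(s)}_r(\tau) = -\partial L_{mse}(\tau)/\partial \mathbf{w}^{(s)}_r$, so integrating in $\tau$ from $0$ to $t$ and combining the two estimates gives
$$\|\mathbf{w}^{(s)}_r(t)-\mathbf{w}^{(s)}_r(0)\|_2 \le \int_0^t\!\|\dot{\mathbf{w}}^{(s)}_r(\tau)\|_2\,d\tau \lesssim \tfrac{\sqrt{d}}{\sqrt{n\,m}}\!\int_0^t\! e^{-\lambda_0\tau/(2n)}\,d\tau\,\|\mathbf{R}(0)\|_F \lesssim \tfrac{\sqrt{n}\,d}{\sqrt{m}\,\lambda_0}\|\mathbf{R}(0)\|_F,$$
which matches the stated $R'_s$ up to absorbing constants.

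The main obstacle is bookkeeping across the three parameter groups rather than any conceptual difficulty. For $s\in\{\mu,\sigma\}$, the per-neuron partial derivative carries an extra factor $w^{(d)}_{r,k}$; a naive $\ell_2$ bound on the $k$-th column of $\mathbf{W}^{(d)}$ would scale as $\sqrt{m}$ and destroy the $1/\sqrt{m}$ target, so one must exploit the $\ell_4$ bound $\|\mathbf{w}^{(d)}_k(0)\|_4 \le c_{d,4}m^{1/4}$ from the hypotheses of Lemma \ref{lem:train} in the Cauchy--Schwarz step to keep the last-layer contribution bounded in $m$. For $s=d$, the partial derivative is instead a $d$-dimensional vector indexed by the output coordinate, so the index counting in the Cauchy--Schwarz step is slightly different, although the final $m$-scaling is the same. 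Verifying that all three cases yield the common bound $R'_s$, and that the eigenvalue lower bound $\lambda_0(t)\ge\lambda_0/2$ is invoked only on $(0,T)$ where it holds by assumption, completes the argument.
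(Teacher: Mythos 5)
Your core argument follows the paper's route exactly: (i) a Grönwall bound $\|\mathbf{R}(t)\|_F \le e^{-\lambda_0 t/(2n)}\|\mathbf{R}(0)\|_F$ obtained by inserting $\lambda_{\min}(\boldsymbol{\Theta}(t))\ge\lambda_0/2$ into the residual dynamics, (ii) a per-neuron gradient bound of order $\frac{\sqrt{d}}{\sqrt{nm}}\|\mathbf{R}(t)\|_F$ from the closed-form partial derivatives and the Lipschitz bounds on $\sigma,\psi$, and (iii) integration of $e^{-\lambda_0\tau/(2n)}$ over $[0,t]$ to pull out the $2n/\lambda_0$ factor. The resulting scaling $\frac{\sqrt{nd}}{\sqrt{m}\lambda_0}\|\mathbf{R}(0)\|_F$ is even a bit sharper than the stated $R'_s$ (which carries $d$, not $\sqrt d$), and is consistent with the paper's derivation once the paper's constants $(R'_d+\sqrt d c_{d,2})(R'_\mu+\sqrt d c_{\mu,2})\approx d\,c_{d,2}c_{\mu,2}$ are retained. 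So the proposal is essentially correct and takes the same approach.

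The one thing I would fix is the remark about the $\ell_4$ bound on $\mathbf{w}^{(d)}_k$. That bound $\|\mathbf{w}^{(d)}_k(0)\|_4 \le c_{d,4}m^{1/4}$ concerns a \emph{column} of $\mathbf{W}^{(d)}$ (indexed over the $m$ neurons), and it is needed in Lemma~\ref{lem:train}, where one sums $w^{(d)}_{r,k}(0)^2\|\mathbf{w}^{(\mu)}_r(t)-\mathbf{w}^{(\mu)}_r(0)\|_2$ over $r\in[m]$ and applies Cauchy--Schwarz across neurons. In the present lemma, by contrast, the per-neuron gradient $\partial L_{mse}/\partial\mathbf{w}^{(s)}_r$ involves only the single \emph{row} $\mathbf{w}^{(d)}_r\in\mathbb{R}^d$; no sum over $r$ appears. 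Cauchy--Schwarz across the output index $k\in[d]$ produces $\|\mathbf{w}^{(d)}_r\|_2 = O(\sqrt d)$, which is independent of $m$ and poses no threat to the $1/\sqrt m$ target. Thus no $\ell_4$ device is required here, and the claim that it is the ``main obstacle'' of this step is a mischaracterization --- the step is straightforward once one keeps track that the relevant object is a row, not a column, of $\mathbf{W}^{(d)}$.
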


\begin{proof} [Proof of Lemma \ref{lem:muchange}]
The dynamics of loss can be calculated,
$$
{\small \begin{aligned}
& \frac{d}{dt}  \mathcal{L}(t) =  -\frac{1}{n} \left\| \left({\bf X}- \hat{\mathbf{f}} ({\bf X};t) \right)^\top  \boldsymbol{\Theta} (t)\left({\bf X}- \hat{\mathbf{f}} ({\bf X};t) \right)  \right \|_F   \le - \frac{\lambda_0}{n}  \left \| {\bf X} - \hat{\mathbf{f}}({\bf X};t) \right\|^2_F.
\end{aligned} }
$$
Integrating the differential function, the loss can be bounded as follows:
{\small \begin{align*}
\mathcal{L}(t)  \le \exp\left(-(\lambda_0/n) t \right) \mathcal{L}(0),
\end{align*} }
which implies the linear convergence rate of the stochastic neural network. Then the gradient flow for $\mathbf{w}_r^{(\mu)}$ is as follows,
$$
{\small \begin{aligned}
 \left \| \frac{d}{dt} \mathbf{w}^{(\mu)}_{r}(t) \right \|_2  & = \frac{1}{n} \left \| \sum_{i=1}^n  (\mathbf{x}_i - \hat{\mathbf{f}}_i(t))^\top  \frac{1}{\sqrt{m}}  \mathbf{w}^{(d)}_{r} \hat{\sigma}'( z_{i,r} ) \mathbf{x}_i \right \|_2 \\
&
 \le \frac{\beta}{n\sqrt{m}} \sum_{i=1}^n \left \|\mathbf{x}_i -  \hat{\mathbf{f}}_i(t)\right \|_2   \left \|\mathbf{w}^{(d)}_{r}(t) \right  \|_2 \left \|\mathbf{w}^{(\mu)}_{r}(t) \right  \|_2  \\
& \le \frac{\beta }{\sqrt{mn}}  \left \| \mathbf{X} -  \hat{\mathbf{f}}(\mathbf{X};0) \right \|_F \exp(- (\lambda_0/n) t)     \left(R'_d + \sqrt{d} c_{d,2} \right) \left(R'_\mu + \sqrt{d} c_{\mu,2} \right).
\end{aligned} }
$$
Integrating the gradient, we have: 
$$
{\small \begin{aligned}
& \left\|\mathbf{w}_{r}^{(\mu)}(T) -\mathbf{w}_{r}^{(\mu)}(0) \right \|_2 \le \int_{0}^T \left \|  \frac{d}{dt} \mathbf{w}_{r}^{(\mu)}(t) \right \|_2 dt    \le \frac{\beta \sqrt{n} \left \| \mathbf{X} - \hat{\mathbf{f}}({\bf X};0) \right \|_F  c_{d,2} c_{\mu,2} d } {\sqrt{m} \lambda_0}.
\end{aligned} }
$$

Similarly, we have: 
$$
{\small \begin{aligned}
& \left\|\mathbf{w}_{r}^{(d)}(T) -\mathbf{w}_{r}^{(d)}(0) \right \|_2 \le \int_{0}^T \left \|  \frac{d}{dt} \mathbf{w}_{r}^{(d)}(t) \right \|_2 dt    \le \frac{L \sqrt{n} \left \| \mathbf{X} - \hat{\mathbf{f}}({\bf X};0) \right \|_F c_{\mu,2} \sqrt{d} } {\sqrt{m} \lambda_0}.
\end{aligned} }
$$
\end{proof}

Lemma \ref{lem:muchange} states that once the least eigenvalue of NTK \textit{during training} are bounded, the change of weight will be bounded (evidenced by empirical simulation shown in Figure \ref{fig:weight}). By employing a proof by contradiction, combined with the results from Lemma, we can deduce that  \textit{during training} the NTKs of the SNN remain close to the deterministic kernel, provided the neural network is sufficiently wide. In a conclusion, with all the lemmas at hand, we arrive at the final Theorem \ref{thm:opt} by the following lemma:
\begin{lemma} \label{lem:final}
If $R'_{\mu} < R_{\mu}$, $R'_{\sigma} < R_{\sigma}$, and $R'_{d} < R_{d}$, then for all $t \ge 0$, $\lambda_{0}\left(\boldsymbol{\Theta} (t) \right) \ge \frac{\lambda_0}{2}$; Besides, the loss follows:
$$
 {L}(t)  \le \exp(-(\lambda_0/n) t)  {L}(0).
$$
\end{lemma}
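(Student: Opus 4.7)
The plan is a standard bootstrapping (continuity) argument that chains together Lemma~\ref{lem:ntk_init}, Lemma~\ref{lem:train}, and Lemma~\ref{lem:muchange}. The assumption $R'_{s} < R_{s}$ for $s \in \{\mu,\sigma,d\}$ is precisely what makes the two circular bounds compatible: if the weights have moved by at most $R'_{s}$, they have in particular moved by less than $R_{s}$, so Lemma~\ref{lem:train} applies with strict slack. This slack is what we will use to close the contradiction.

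Concretely, I would first establish the eigenvalue statement $\lambda_{0}(\boldsymbol{\Theta}(t)) \ge \lambda_{0}/2$ for all $t \ge 0$ as follows. By Lemma~\ref{lem:ntk_init}, the bound holds at $t = 0$ (in fact with slack, $\lambda_{0}(\boldsymbol{\Theta}(0)) \ge 3\lambda_{0}/4$). Define
\[
t^{\star} \;=\; \inf\{\, t \ge 0 : \lambda_{0}(\boldsymbol{\Theta}(t)) < \lambda_{0}/2 \,\}.
\]
Assume for contradiction that $t^{\star} < \infty$. Since the weights evolve continuously under gradient flow and $\boldsymbol{\Theta}(t)$ depends continuously on the weights (by Assumption~\ref{ass:activation}), the map $t \mapsto \lambda_{0}(\boldsymbol{\Theta}(t))$ is continuous, so $\lambda_{0}(\boldsymbol{\Theta}(t^{\star})) = \lambda_{0}/2$ and $\lambda_{0}(\boldsymbol{\Theta}(t)) \ge \lambda_{0}/2$ on $[0,t^{\star}]$. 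Applying Lemma~\ref{lem:muchange} on this interval yields $\|\mathbf{w}^{(s)}_{r}(t) - \mathbf{w}^{(s)}_{r}(0)\|_{2} \le R'_{s}$ for every $t \le t^{\star}$ and every $s \in \{\mu,\sigma,d\}$. By hypothesis $R'_{s} < R_{s}$, so the weight-displacement premise of Lemma~\ref{lem:train} is satisfied (with strict inequality) at $t = t^{\star}$. Lemma~\ref{lem:train} then gives $\|\boldsymbol{\Theta}(t^{\star}) - \boldsymbol{\Theta}^{\infty}\|_{2} \le \lambda_{0}/2$, and Weyl's inequality together with $\lambda_{0}(\boldsymbol{\Theta}^{\infty}) = \lambda_{0}$ yields $\lambda_{0}(\boldsymbol{\Theta}(t^{\star})) \ge \lambda_{0} - \lambda_{0}/2 = \lambda_{0}/2$, with strict inequality coming from the strict slack in $R'_{s} < R_{s}$. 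This contradicts $\lambda_{0}(\boldsymbol{\Theta}(t^{\star})) = \lambda_{0}/2$, so $t^{\star} = \infty$ and the eigenvalue bound holds for all $t \ge 0$.

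The second part (linear convergence of the loss) is then immediate from the ODE already derived in the proof of Lemma~\ref{lem:muchange}. Having $\lambda_{0}(\boldsymbol{\Theta}(t)) \ge \lambda_{0}/2$ globally (or, in the notation used there, $\lambda_{0}$ after absorbing the factor of $2$ into constants), one has
\[
\frac{d}{dt} L(t) \;=\; -\frac{1}{n}\bigl(\mathbf{X} - \hat{\mathbf{f}}(\mathbf{X};t)\bigr)^{\top} \boldsymbol{\Theta}(t) \bigl(\mathbf{X} - \hat{\mathbf{f}}(\mathbf{X};t)\bigr) \;\le\; -\frac{\lambda_{0}}{n}\, L(t),
\]
and Gr\"{o}nwall's inequality gives $L(t) \le \exp(-(\lambda_{0}/n)\,t)\, L(0)$.

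The main obstacle, as I see it, is the continuity step used in the bootstrapping argument: one must justify that $\lambda_{0}(\boldsymbol{\Theta}(t))$ is genuinely continuous in $t$, which relies on Assumption~\ref{ass:activation} (so the kernel entries are smooth in the weights) and on the gradient flow producing continuous weight trajectories. The rest of the argument is essentially bookkeeping to make sure the perturbation radii from Lemma~\ref{lem:train} and the displacement bounds from Lemma~\ref{lem:muchange} interlock with strict slack, so that the contradiction at $t^{\star}$ is strict rather than borderline.
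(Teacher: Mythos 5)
Your proof is correct and follows essentially the same approach as the paper: a bootstrapping contradiction argument that chains Lemma~\ref{lem:ntk_init}, Lemma~\ref{lem:train}, and Lemma~\ref{lem:muchange}, using the slack $R'_s < R_s$ to close the loop, and then reads off the loss decay from the ODE in Lemma~\ref{lem:muchange} via Gr\"onwall. Your version is somewhat more explicit than the paper's terse ``standard contradiction'' sketch about defining the first failure time $t^\star$ and justifying continuity of $t \mapsto \lambda_0(\boldsymbol{\Theta}(t))$, but the underlying argument is the same.
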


\begin{proof} [Proof of Lemma \ref{lem:final}]
The proof is a standard contradiction. Suppose the conclusion does not hold at time $t$, which implies that there exists $r \in [m]$,  $ \left \| \mathbf{w}^{(\mu)}_r(t) -\mathbf{w}^{(\mu)}_r(t)  \right\|_2 > R' $,
then by Lemma \ref{lem:train} we know there exists $s \le t$ such that $ \lambda_0(\boldsymbol{\Theta}(s)) \le \lambda_0/2$. However, this is contradictory to Lemma \ref{lem:muchange}. 

To finalize the proof, we bound $\mathcal{L}(0)$:
$$
{\small \begin{aligned}
   &  \left\| {\bf X} -  \hat{\mathbf{f}} ({\bf X};0) \right\|^2_F  
     =  \sum_{i=1}^n   \left \| \textcolor{black}{\mathbf{x}^{(e)}_i }\right\|^2_2 + 2\left\| \textcolor{black}{\mathbf{x}^{(e)}_i} \right \|_2  \| \hat{\mathbf{f}}_i(0) \|_2  + \| \hat{\mathbf{f}}_i(0)\|_2^2   = \textcolor{black}{\Theta(n)}.
\end{aligned} }
$$
Finally, $R'_{\mu} < R_{\mu}$, $R'_{\sigma} < R_{\sigma}$, and $R'_{d} < R_{d}$ result in
$ m = \Omega \left( \frac{n^5 d^3  }{\lambda_0^4 \delta^2 } \right)$
which completes the proof.
\end{proof}

Finally, we give the detailed proof of Theorem \ref{thm:solution}, which is based on the linearization of the output function with respect to the weight space. 

\begin{proof} [Proof of Theorem \ref{thm:solution}]

Our proof first establishes the result of kernel ridge regression in the infinite-width limit, then bounds the perturbation on the network's prediction. The output function can be expressed as,
$$
{\small \begin{aligned}
 \hat{\mathbf{f}}^\infty ({\bf x};t) & = \hat{\mathbf{f}}^\infty ({\bf x};0)  + \boldsymbol{\Phi}_{{\mu}}({\bf x})^\top \left(\boldsymbol{\theta}^{(\mu)}(t)-\boldsymbol{\theta}^{(\mu)}(0)\right),
 \end{aligned} }
$$
where $\boldsymbol{\theta}^{(\mu)} \triangleq  \vec{ \mathbf{W} }^{(\mu)} \in \mathbb{R}^{md}$, and 
$\boldsymbol{\Phi}_{{\mu}}({\bf x}) = \nabla_{\boldsymbol{\theta}^{(\mu)}} \hat{\mathbf{f}}({\bf x},0) \in \mathbb{R}^{md \times d}$. It is known that the objective function with KL divergence follows:
{\small \begin{align*}
    \mathcal{L}(t) =   \frac{1}{2n}   \left \|\hat{\mathbf{f}} (\mathbf{X})- \mathbf{X} \right \|^2_F +  \beta    \left \| \boldsymbol{\theta}^{(\mu)}(t) - \boldsymbol{\theta}^{(\mu)}(0)  \right \|^2_2.
\end{align*} } 
We then calculate the gradient flow dynamics for mean weight:
$$
{\small \begin{aligned}
& \frac{d \boldsymbol{\theta}^{(\mu)}(t)}{d t}  = \frac{\partial \mathcal{L}(t)}{ \partial   \boldsymbol{\theta}^{(\mu)}}  =\boldsymbol{\Phi}_{{\mu}}({\bf X})  \left(\hat{\mathbf{f}}^\infty({\bf X};t)  -{\bf X} \right)  +  \beta \left(\boldsymbol{\theta}^{(\mu)}(t)-\boldsymbol{\theta}^{(\mu)}(0)\right)  \\ 
&= \boldsymbol{\Phi}_{{\mu}}({\bf X}) \boldsymbol{\Phi}_{{\mu}}({\bf X})^\top \left(\boldsymbol{\theta}^{(\mu)}(t)-  \boldsymbol{\theta}^{(\mu)}(0)\right)   +\boldsymbol{\Phi}_{{\mu}}({\bf X}) (\hat{\mathbf{f}}^\infty ({\bf X};0) - \mathbf{X})   + \beta  \left(\boldsymbol{\theta}^{(\mu)}(t)-\boldsymbol{\theta}^{(\mu)}(0)\right) \\
&  = \left( \boldsymbol{\Theta}^{{(\mu)}}  + \beta \mathbf{I} \right)  \left(\boldsymbol{\theta}^{(\mu)}(t)-\boldsymbol{\theta}^{(\mu)}(0)\right)   + \boldsymbol{\Phi}_{{\mu}}({\bf X}) \left(\hat{\mathbf{f}}^\infty ({\bf X};0) - \mathbf{X} \right),  
\end{aligned} }
$$
which is an ordinary differential equation. It is easy to see that the solution is,
$$
{\small \begin{aligned}
 \overline{\boldsymbol{\theta}}^{(\mu)}(t) & = \boldsymbol{\Phi}^\top_{\mu}(\mathbf{X})  \left(\boldsymbol{\Theta}^{(\mu)} +\beta {\bf I} \right)^{-1}   \left({\bf I}-e^{-(\boldsymbol{\Theta}^{(\mu)} +\beta {\bf I})t}\right)  \left(\hat{\mathbf{f}}^\infty ({\bf X};0) - \mathbf{X} \right),
\end{aligned} }
$$
where $\overline{\boldsymbol{\theta}}^{(\mu)}(t) \triangleq \boldsymbol{\theta}^{(\mu)}(t) - \boldsymbol{\theta}^{(\mu)}(0)$.
Plugging the result into the linearized output function, we have,
{\small \begin{align*}
\hat{\mathbf{f}}^\infty({\bf X};t) & =  {\bf X}-e^{-(\boldsymbol{\Theta}^{(\mu)}({\bf X},{\bf X}) +\beta  {\bf I})t} \left(\hat{\mathbf{f}}^\infty ({\bf X};0) - \mathbf{X} \right).
\end{align*} }
For an arbitrary test data $\mathbf{x}_{te}$, we have,
{\small \begin{align*}
\hat{\mathbf{f}}^\infty({\bf x}_{te};t) & =  \boldsymbol{\Theta}^{(\mu)}({\bf x}_{te},{\bf X}) \left(\boldsymbol{\Theta}^{(\mu)} +\beta {\bf I}  \right)^{-1}  \left({\bf I}-e^{-(\boldsymbol{\Theta}^{(\mu)}({\bf X},{\bf X}) +\beta {\bf I})t}\right)\mathbf{X}.
\end{align*} }
when we take the time to be infinity, 
{\small \begin{equation} \label{eq:kernel_ridge}
{\small \begin{aligned}
\hat{\mathbf{f}}^\infty({\bf x}_{te}; \infty) & = \boldsymbol{\Theta}^{(\mu)}({\bf x}_{te},{\bf X}) \left(\boldsymbol{\Theta}^{(\mu)} +\beta  {\bf I}  \right)^{-1} \mathbf{X}  .
\end{aligned} }
\end{equation} }

The next step is to show the difference between finite-width neural network and infinitely-wide network:
{\small \begin{align*}
 \left| \hat{\mathbf{f}}({\bf x}_{te}) - \hat{\mathbf{f}}^\infty({\bf x}_{te}) \right| \le
O(\mathcal{E}).
\end{align*} }
 where $\mathcal{E} = \mathcal{E}_{\rm init} + \frac{\sqrt{n}\mathcal{E}_\Theta }{ \lambda_0 +  \beta  } $ with $\left \|\hat{\mathbf{f}} \left(\boldsymbol{\theta}(0),\mathbf{x}_{te}\right) \right \|_2 \le \mathcal{E}_{\rm init}$ and $\|\boldsymbol{\Theta}^\infty -\boldsymbol{\Theta}(t)  \|_2 \le \mathcal{E}_{\Theta}$. 
Note the expression in Equation~\eqref{eq:kernel_ridge} can be rewritten as $\hat{\mathbf{f}}^\infty(\mathbf{x}_{te}) =   \boldsymbol{\Phi}(\mathbf{x}_{te})^\top \boldsymbol{\beta} $ and the solution to this equation can be further written as the result of applying gradient flow on the following kernel ridge regression problem
{\small \begin{align*}
\min_{\boldsymbol{\beta}} \sum_{i=1}^{n} \frac{1}{2n} \left \|   \boldsymbol{\Phi}(\mathbf{x}_i)^\top \boldsymbol{\beta} -\mathbf{x}_i\right \|_2^2 +  \beta  \left \| \boldsymbol{\beta} \right \|^2_2,
\end{align*} }
with initialization $\boldsymbol{\beta} (0) = 0$.
We use $\boldsymbol{\beta}(t)$ to denote this parameter at time $t$ trained by gradient flow and $\hat{\mathbf{f}}^\infty \left(\mathbf{x}_{te},\boldsymbol{\beta} (t)\right)$ be the predictor for $\mathbf{x}_{te}$ at time $t$.
With these notations, we rewrite {\small \begin{align*}
\hat{\mathbf{f}}^\infty (\mathbf{x}_{te}) = \int_{t=0}^{\infty}\frac{d \hat{\mathbf{f}}(\boldsymbol{\beta}(t),\mathbf{x}_{te})}{dt} dt,
\end{align*} }
where we have used the fact that the initial prediction is $0$.

We thus can analyze the difference between the SNN predictor and infinite-width SNN predictor via this integral form as follows:
{\small \begin{align*}
& \left \| \hat{\mathbf{f}}^\infty (\mathbf{x}_{te}) - \hat{\mathbf{f}} \left(\mathbf{x}_{te}\right) \right \|_2
\le   \left \| \hat{\mathbf{f}}(\boldsymbol{\theta}(0),\mathbf{x}_{te}) \right \|_2    + \left \| \int_{t =0}^{\infty} \left(\frac{d \hat{\mathbf{f}}(\boldsymbol{\theta}(t),\mathbf{x}_{te})}{dt}  - \frac{d \hat{\mathbf{f}}^\infty(\boldsymbol{\beta}(t),\mathbf{x}_{te})}{dt}\right) dt \right\|_2  \\
\le & \mathcal{E}_{init} + \bigg \| \frac{1}{n}  \int_{t=0}^\infty \left(\boldsymbol{\Theta}(\mathbf{x}_{te},\mathbf{X};t)-\boldsymbol{\Theta}^\infty(\mathbf{x}_{te},\mathbf{X})\right)^\top (\hat{\mathbf{f}}(t)-\mathbf{X})dt \\ &+ \beta    \int_{t=0}^\infty \left(\boldsymbol{\Phi}(\mathbf{x}_{te},t)-\boldsymbol{\Phi}^\infty(\mathbf{x}_{te})\right)^\top   \boldsymbol{\beta}(t)dt \bigg \|_2 \\  
&+ \bigg \| \frac{1}{n} \int_{t=0}^\infty \boldsymbol{\Theta}^\infty(\mathbf{x}_{te},\mathbf{X})^\top   (\hat{\mathbf{f}}^\infty(t)-\hat{\mathbf{f}} (t)) dt    +  \beta   \int_{t=0}^\infty \left(\boldsymbol{\Phi}^\infty(\mathbf{x}_{te})\right)^\top   (\boldsymbol{\beta}(t) - \overline{\boldsymbol{\theta}}(t)) dt \bigg \|_2 
\\ 
\le & \mathcal{E}_{init}+ \bigg(  \max_{0\le t \le \infty} \left\|\boldsymbol{\Theta}(\mathbf{x}_{te},\mathbf{X};t)-\boldsymbol{\Theta}^\infty(\mathbf{x}_{te},\mathbf{X}) \right\|_2  \int_{t=0}^\infty \left\|\hat{\mathbf{f}}(t)-\mathbf{X} \right\|_2 dt  \\ & +  \beta  \max_{0\le t \le \infty} \left\|\boldsymbol{\Phi}(\mathbf{x}_{te};t)-\boldsymbol{\Phi}^\infty(\mathbf{x}_{te}) \right\|_2   \int_{t=0}^\infty \| \boldsymbol{\beta} \|_2 dt\bigg) \\+ 
& \bigg(  \max_{0\le t \le \infty} \|\boldsymbol{\Theta}^\infty(\mathbf{x}_{te},\mathbf{X})\|_2     \int_{t=0}^\infty \|\hat{\mathbf{f}}(t)-\hat{\mathbf{f}}^\infty(t) \|_2 dt   +  \beta  \max_{0\le t \le \infty} \left\| \boldsymbol{\Phi}^\infty(\mathbf{x}_{te}) \right\|_2 \int_{t=0}^\infty \| \boldsymbol{\beta}(t) - \overline{\boldsymbol{\theta}}(t) \|_2 dt \bigg) \\
\triangleq & \mathcal{E}_{init} + I_2 + I_3.
\end{align*} }
For the second term $I_2$, recall that $\|\boldsymbol{\Theta}^\infty(\mathbf{x}_{te},\mathbf{X} )-\boldsymbol{\Theta}(\mathbf{x}_{te},\mathbf{X}; t)\|_2 \le \frac{\lambda_0}{2}$ by Lemma \ref{lem:train}.
Besides, we know that $\| \hat{\mathbf{f}}(t)- \mathbf{X} \|^2_2 +  \beta  \| \overline{\boldsymbol{\theta}} \|^2_2 \le  \exp(- (\frac{  \lambda_0}{2} +  \beta ) t) \| \hat{\mathbf{f}}(0)-  \mathbf{X}\|^2_2 $.
Therefore, we can bound:
{\small \begin{align*}
& \int_{0}^\infty \|\hat{\mathbf{f}}(t) - \mathbf{X}\|_2 +  \beta  \| \overline{\boldsymbol{\theta}}(t)\|_2 dt  
\le   \int_{t=0}^\infty \exp(- (\frac{  \lambda_0}{2} +  \beta  ) t)( \|\hat{\mathbf{f}}(0) - \mathbf{X}\|_2  ) dt  
 =   O\left(\frac{\sqrt{n}}{  \lambda_0 +  \beta  }\right).
\end{align*} }
As a result, we have 
    $I_2 = O \left(\frac{\sqrt{n}\mathcal{E}_\Theta }{  \lambda_0 +  \beta  }\right)$.
To bound $I_3$, we have 
{\small \begin{align*}
&  \int_{0}^\infty \| \hat{\mathbf{f}}(t)-\hat{\mathbf{f}}^\infty(t)\|_2 +  \beta  \|  \boldsymbol{\beta} -\overline{\boldsymbol{\theta}} \|_2 dt 
 \\ & \le  \int_{0}^\infty \|\hat{\mathbf{f}}(t)- \mathbf{X} \|_2 +  \beta  \| \overline{\boldsymbol{\theta}} \|_2 dt   + \int_{0}^\infty \|\hat{\mathbf{f}}^\infty(t)-  \mathbf{X} \|_2 + \beta  \| {\boldsymbol{\beta}} \|_2 dt  
=   O\left(\frac{\sqrt{n}}{  \lambda_0 + \beta  }\right).
\end{align*} }

As a result, we have $
    I_3 = O \left(\frac{\sqrt{n}\mathcal{E}_\Theta }{ \lambda_0 +  \beta  }\right)$.
Lastly, we put things together and get  {\small \begin{align*}
& |\hat{\mathbf{f}}(t)-\hat{\mathbf{f}}^\infty(t)|  
  = 
O\left(\mathcal{E}_{init} + \mathcal{E}_{\Theta} \frac{\sqrt{n}}{\lambda_0+  \beta  } \right). 
\end{align*} }

\end{proof}

\vspace{-1cm}

\section{Experiments}

In this section, we provide empirical evidence to support our theoretical analysis concerning the training dynamics of over-parameterized stochastic neural networks, which are optimized using VAE training objectives. Our experimental results, derived from training on the MNIST dataset, corroborate our theoretical predictions. In addition, we report our observation that VAEs with larger latent spaces are capable of learning more information, which substantiates the rationale behind our theoretical examination of the convergence properties of over-parameterized VAEs.

\subsection{Theoretical verification}

\begin{figure}
  \begin{center}
\includegraphics[width=0.85\textwidth]{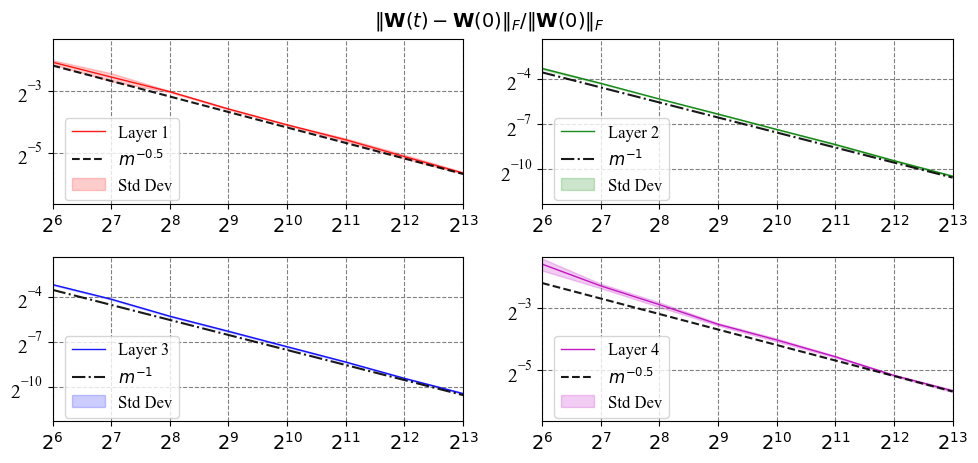} 
  \end{center}
    \caption{\textcolor{black}{Relative Frobenius norm change in weights after training, where $m$ is the width of the network. Solid lines correspond to empirical simulations and dotted lines are theoretical predictions.}}
    \label{fig:weight}
   \vspace{-3mm}
\end{figure}

To empirically validate our lemmas, we employ a three-hidden-layer \textcolor{black}{} fully connected network, guided by the training objective function as presented in Equation (\ref{eq:objective}). The network parameters are initialized using the Neural Tangent Kernel (NTK) parameterization, in line with Equation (\ref{eq:net}). For training, we adopt the ordinary mean-squared error (MSE) as the reconstruction loss and employ full-batch gradient descent with a consistent learning rate of 1  on a subset of the MNIST dataset containing 128 samples and 10 classes. We measure the change in weights of each layer, denoted by $\| \mathbf{W}(t)-\mathbf{W}(0) \|_F/ \|\mathbf{W}(0) \|_F$, after performing $t=2^{17}$ steps of gradient descent updates from random initialization. Figure \ref{fig:weight} displays the results for each layer. We only measure the change in weight $\mathbf{W}^{(\mu)}$ for the latent layer ($\mu$). Our observations show that the relative Frobenius norm changes in the Encoder and Decoder scales as $1/\sqrt{m}$, while the hidden layers' weights scale as $1/m$. This result confirms that the weights of SNN do not move too much during training, and further confirms the correctness of our theoretical claim (Lemma \ref{lem:muchange}). Notably, a similar convergence rate for weight changes in deterministic neural networks was observed in \cite{lee2019wide}. 
To empirically validate our lemmas, we employ a three-hidden-layer \textcolor{black}{Tanh} fully connected network, guided by the training objective function as presented in Equation (\ref{eq:objective}). The network parameters are initialized using the Neural Tangent Kernel (NTK) parameterization, in line with Equation (\ref{eq:net}). For training, we adopt the ordinary mean-squared error (MSE) as the reconstruction loss and employ full-batch gradient descent with a consistent learning rate of 1  on a subset of the MNIST dataset containing 128 samples and 10 classes. We measure the change in weights of each layer, denoted by $\| \mathbf{W}(t)-\mathbf{W}(0) \|_F/ \|\mathbf{W}(0) \|_F$, after performing $t=2^{17}$ steps of gradient descent updates from random initialization. Figure \ref{fig:weight} displays the results for each layer. We only measure the change in weight $\mathbf{W}^{(\mu)}$ for the latent layer ($\mu$). Our observations show that the relative Frobenius norm changes in the Encoder and Decoder scales as $1/\sqrt{m}$, while the hidden layers' weights scale as $1/m$. This result confirms that the weights of SNN do not move too much during training, and further confirms the correctness of our theoretical claim (Lemma \ref{lem:muchange}). Notably, a similar convergence rate for weight changes in deterministic neural networks was observed in \cite{lee2019wide}.

\subsection{Large latent space can learn more}
\begin{figure*}
    \centering
   \begin{subfigure}[t]{0.24\textwidth}
       \raisebox{-\height}{\includegraphics[width=\textwidth]{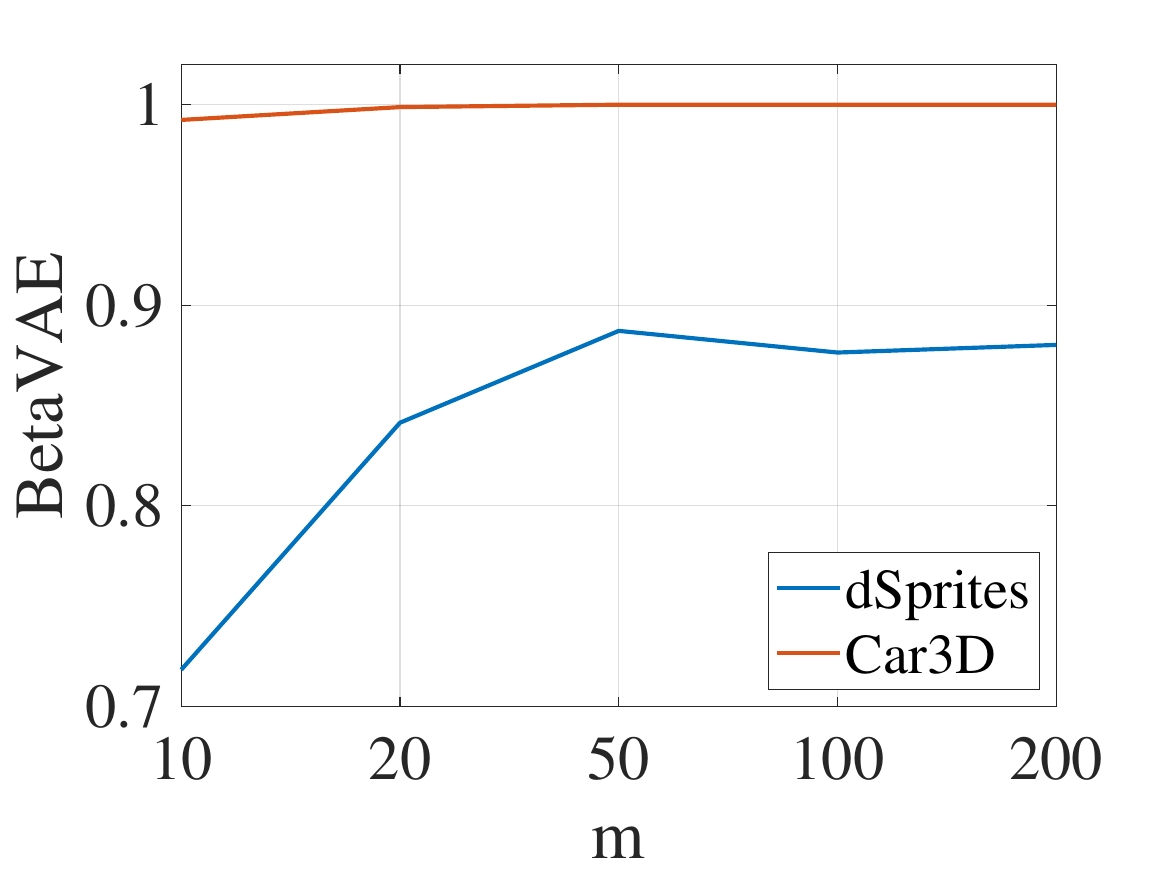}}
   \end{subfigure}
   \hfill
   \begin{subfigure}[t]{0.24\textwidth}
       \raisebox{-\height}{\includegraphics[width=\textwidth]{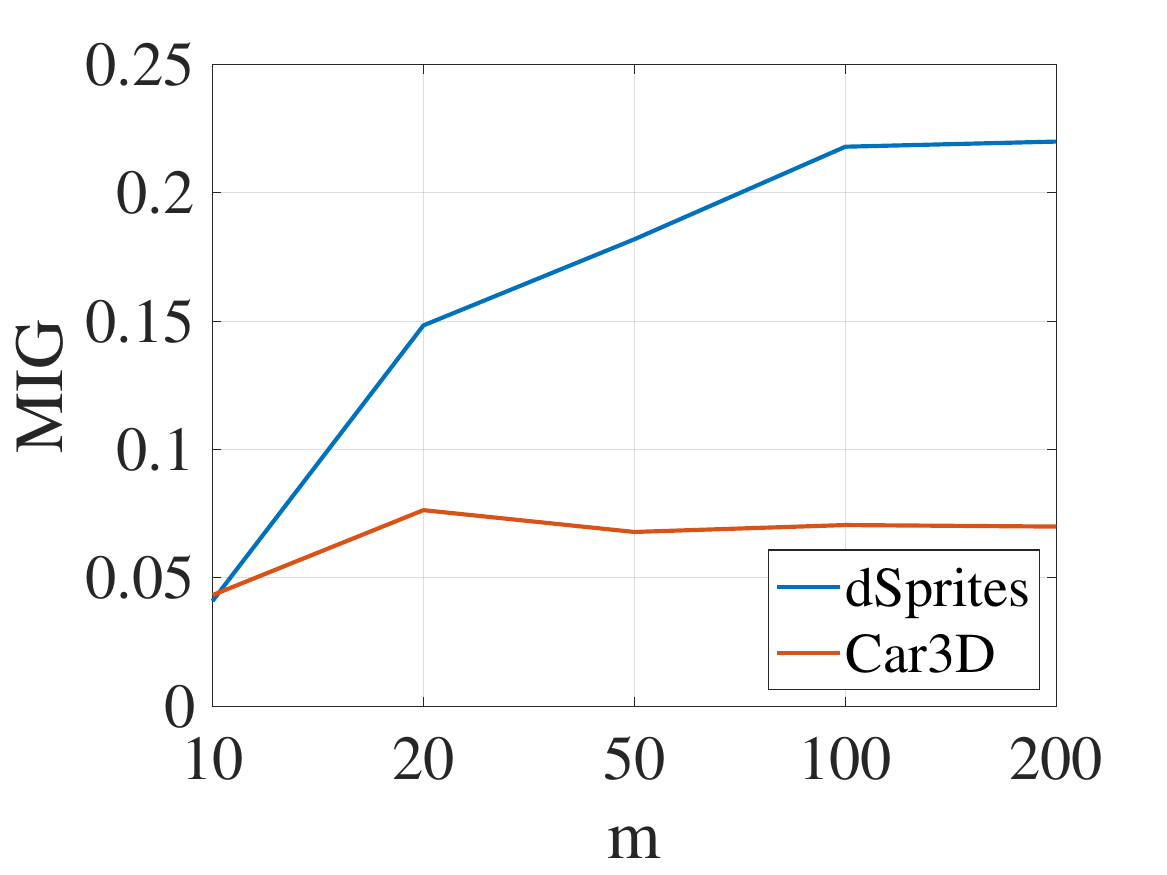}}
   \end{subfigure}
   \hfill
   \begin{subfigure}[t]{0.24\textwidth}
       \raisebox{-\height}{\includegraphics[width=\textwidth]{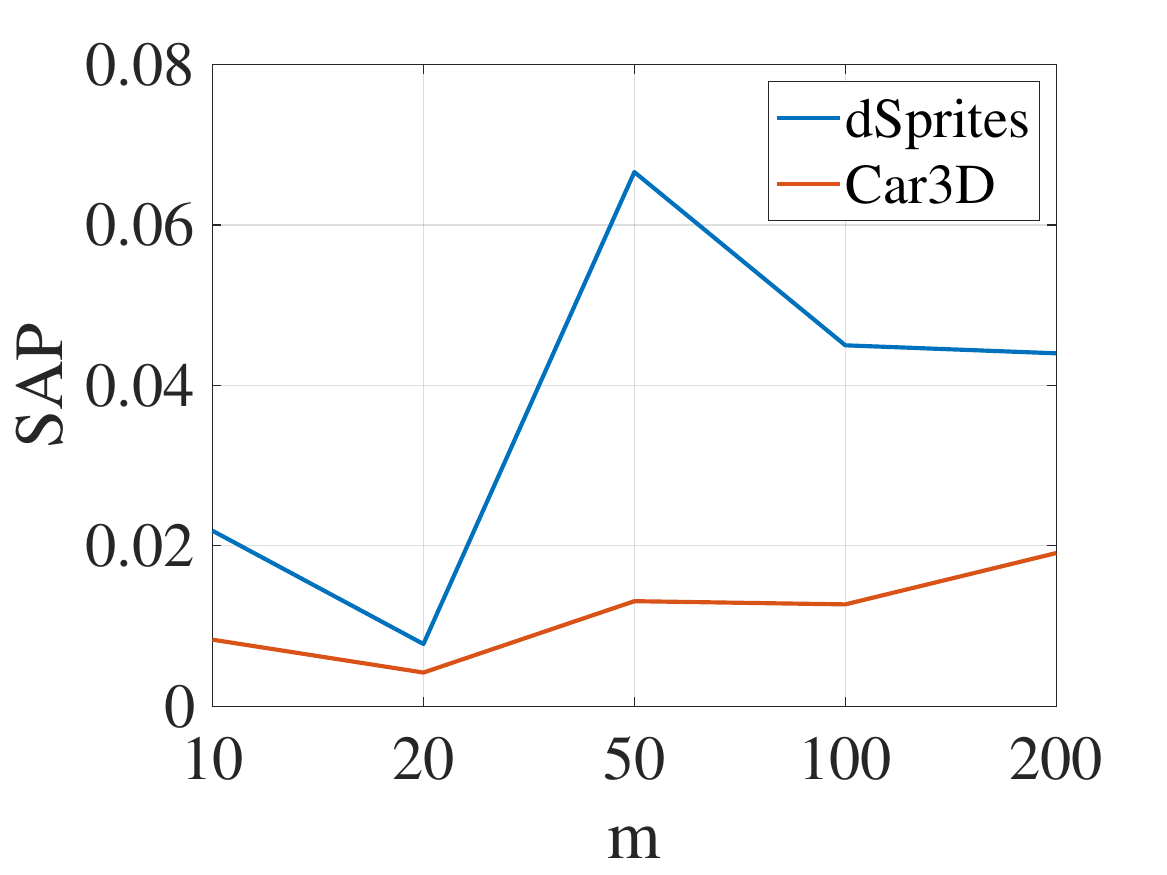}}
   \end{subfigure}
   \hfill
   \begin{subfigure}[t]{0.24\textwidth}
       \raisebox{-\height}{\includegraphics[width=\textwidth]{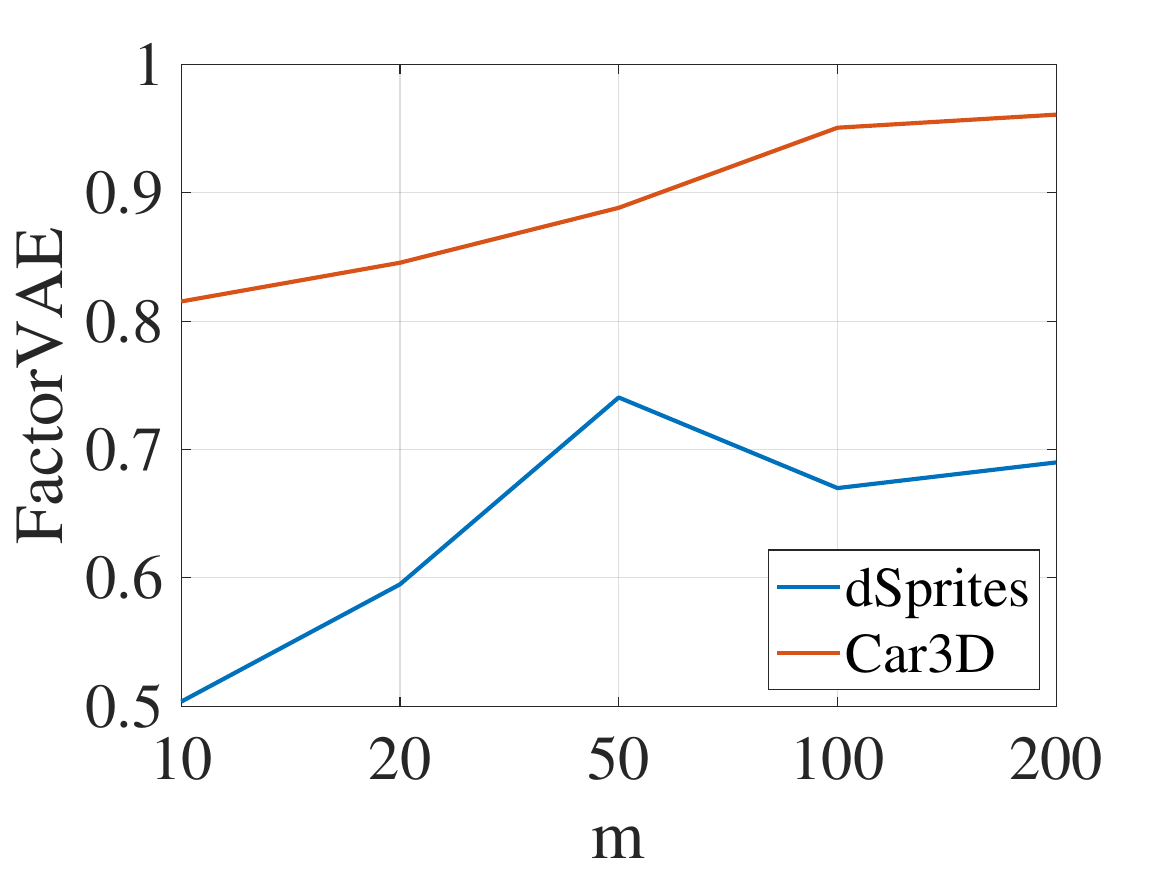}}
   \end{subfigure}
   \caption{Disentanglement scores for networks of latent dimension: $m=10,20,50,100,200$ on dSprites and Cars 3D. Observations: the larger the latent space, the better the disentangle learning. 
} \label{fig:measure}
\end{figure*}


In this subsection, we report our experimental observations, aligning with numerous prior studies \cite{song2019latent,lim2020deep}. We observed that larger latent spaces are capable of capturing more information, as evidenced by higher disentanglement scores and the emergence of additional features not discernible in models with narrower VAE configurations.

Adopting the experimental setup utilized in Beta-VAE\cite{higgins2016beta}, we explored the effects of varying latent space dimensions. Our experiments were conducted on the dSprites\cite{higgins2016beta} and Cars3D datasets\cite{reed2015deep}. As shown in Figure \ref{fig:measure}, the width of the latent space, denoted by $m$, is varied across [10,20,50,100,200]. We assessed the performance using a suite of disentanglement score metrics, including the BetaVAE, $\beta$-VAE metric \cite{higgins2016beta}, Mutual Information Gap (MIG) \cite{chen2615isolating}, Separated Attribute Predictability (SAP) score \cite{kumar2017variational}, and Factor-VAE metric \cite{kim2018disentangling}. Our findings indicate that larger latent spaces lead to higher disentanglement scores, with the exception of a less pronounced improvement when employing the BetaVAE metric on the Cars3D dataset. These results corroborate the hypothesis that a larger latent space is capable of capturing more information.

\begin{figure*}
    \centering
    \includegraphics[width=\textwidth]{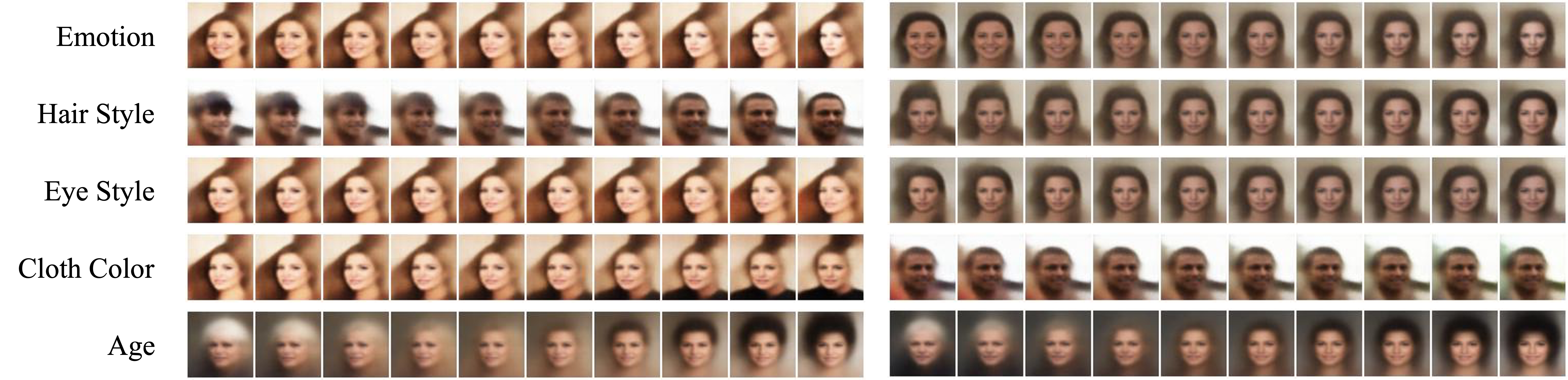}
    \caption{New image attributes discovered by large latent space VAE ($m=256$) but not by small latent space VAE ($m=10$) CelebA dataset.}
    \label{fig:add}
\end{figure*}

Furthermore, in our experiments with the CelebA \cite{liu2015deep} datasets, we observed that a larger latent space can reveal additional features not detected in smaller latent space VAEs. As illustrated in Figure \ref{fig:add}, on the CelebA dataset, a VAE with a latent space of 256 dimensions uncovered new image attributes such as emotion, eye style, and hairstyle, which were not identified by a VAE with a latent space of just 10 dimensions. 
These findings confirm that VAEs with larger latent spaces are capable of detecting additional features not observable in narrower VAE configurations.

These observations validate the intuitive notion that VAEs with larger latent spaces exhibit superior disentanglement performance. This underlines our initial motivation for investigating over-parameterized VAEs, as opposed to conventional VAEs, to leverage the benefits of increased latent dimensionality.

\vspace{-0.3cm}
\section{Conclusion}
\vspace{-0.2cm}
\textcolor{black}{In this work, we have established the convergence of over-parameterized VAEs using the neural tangent kernel techniques. Additionally, we have demonstrated that the expected output function trained with the full objective function and KL divergence converges to the kernel ridge regression, confirming the regularization effect of the additional KL divergence. The theoretical insights presented in this paper pave the way for analyzing stochastic neural networks within other paradigms, such as deep Bayesian networks. Our empirical evaluations corroborate that the theoretical predictions are consistent with real-world training dynamics. 
Furthermore, through experimental investigations on real datasets, we have highlighted the training efficiency of over-parameterized VAEs, as suggested by our theoretical findings. 
}

\def\bibfont{\small}
\vspace{-0.3cm}
\bibliography{sn-bibliography}
\end{document}